%% file: main.tex
\documentclass{article}

\usepackage{microtype}
\usepackage{graphicx}
\usepackage{subcaption}
\usepackage{booktabs} 
\usepackage{multirow}
\usepackage{makecell}
\usepackage{xspace}

\usepackage{hyperref}

\newcommand{\ours}{DyLLM\xspace}

\usepackage[accepted]{icml2026}

\usepackage{amsmath}
\usepackage{amssymb}
\usepackage{mathtools}
\usepackage{amsthm}

\usepackage[capitalize,noabbrev]{cleveref}

\theoremstyle{plain}
\newtheorem{theorem}{Theorem}[section]
\newtheorem{proposition}[theorem]{Proposition}

\theoremstyle{definition}

\theoremstyle{remark}

\usepackage[textsize=tiny]{todonotes}
\usepackage{comment}

\icmltitlerunning{DyLLM: Efficient Diffusion LLM Inference via Saliency-based Token Selection and Partial Attention}

\begin{document}

\twocolumn[
\icmltitle{\ours: Efficient Diffusion LLM Inference via Saliency-based Token Selection and Partial Attention}



  \icmlsetsymbol{equal}{*}

  \begin{icmlauthorlist}
    \icmlauthor{Younjoo Lee}{snu}
    \icmlauthor{Seungkyun Dan}{snu}
    \icmlauthor{Junghoo Lee}{snu}
    \icmlauthor{Jaiyoung Park}{snu}
    \icmlauthor{Jung Ho Ahn}{snu}
  \end{icmlauthorlist}

  \icmlaffiliation{snu}{Seoul National University, Seoul, Republic of Korea}

  \icmlcorrespondingauthor{Younjoo Lee}{younjoo0614@snu.ac.kr}
  \icmlcorrespondingauthor{Jung Ho Ahn}{gajh@snu.ac.kr}

  \icmlkeywords{Diffusion LLM, LLM inference, Sparsity}

  \vskip 0.3in
]



\printAffiliationsAndNotice{}  

\begin{abstract}
Masked diffusion language models enable parallel token decoding, providing a promising alternative to the sequential nature of autoregressive generation. 
However, their iterative denoising process remains computationally expensive because it repeatedly processes the entire sequence at every step. 
We observe that across these diffusion steps, most token representations remain stable; only a small subset, which we term \emph{salient tokens}, contributes meaningfully to the next update.
Leveraging this temporal sparsity, we present \textbf{\ours}, a training-free inference framework that accelerates decoding by selectively computing only these salient tokens.
\ours identifies saliency by measuring the cosine similarity of attention contexts between adjacent denoising steps.
It recomputes feed-forward and attention operations only for salient tokens while reusing cached activations for the remainder.
Across diverse reasoning and code-generation benchmarks, \ours achieves up to 9.6$\times$ higher throughput while largely preserving the baseline accuracy of representative open-source diffusion LLMs, LLaDA and Dream.
\end{abstract}

\input{papers/introduction}
\input{papers/preliminary}
\input{papers/method}

\input{papers/evaluation}

\input{papers/conclusion}

\section*{Acknowledgements}
This work was partly supported by Samsung Electronics Co., Ltd (IO251211-14399-01) and IITP grant funded by MSIT (RS-2025-02304125, RS-2021-II211343, and IITP-2026-RS-2023-00256081).
Jung Ho Ahn, the corresponding author, is with the Department of Intelligence and Information and the Interdisciplinary Program in Artificial Intelligence, Seoul National University.

\section*{Impact Statement}

This paper presents work whose goal is to advance the field of Machine
Learning. There are many potential societal consequences of our work, none
which we feel must be specifically highlighted here.

\bibliography{ref}
\bibliographystyle{icml2026}

\input{papers/appendix}

\end{document}

%% file: papers/introduction.tex
\section{Introduction}

The emergence of discrete denoising diffusion probabilistic models (D3PM)~\cite{d3pm} has introduced a new paradigm for language modeling by enabling diffusion processes directly within discrete token spaces via masking.
This paradigm shift has led to the development of high-performance \emph{Masked Diffusion Language Models} (MDLMs) such as LLaDA~\cite{llada}, Dream~\cite{dream}, and Gemini Diffusion~\cite{geminidiffusion}, which are now approaching the performance levels of standard \emph{Autoregressive Language Models} (ARLMs), such as Llama~\cite{arxiv-llama}.

Unlike ARLMs, which are constrained by a sequential, token-by-token generation process, MDLMs lift this limitation by initializing the response as a sequence of mask tokens and leveraging bidirectional attention to predict the entire sequence simultaneously.
By iteratively unmasking tokens based on the model's predicted probabilities, an MDLM decodes multiple tokens in parallel, providing a viable path toward superior generation throughput.

\begin{figure}[t!]
  \centerline{\includegraphics[width=0.95\columnwidth]{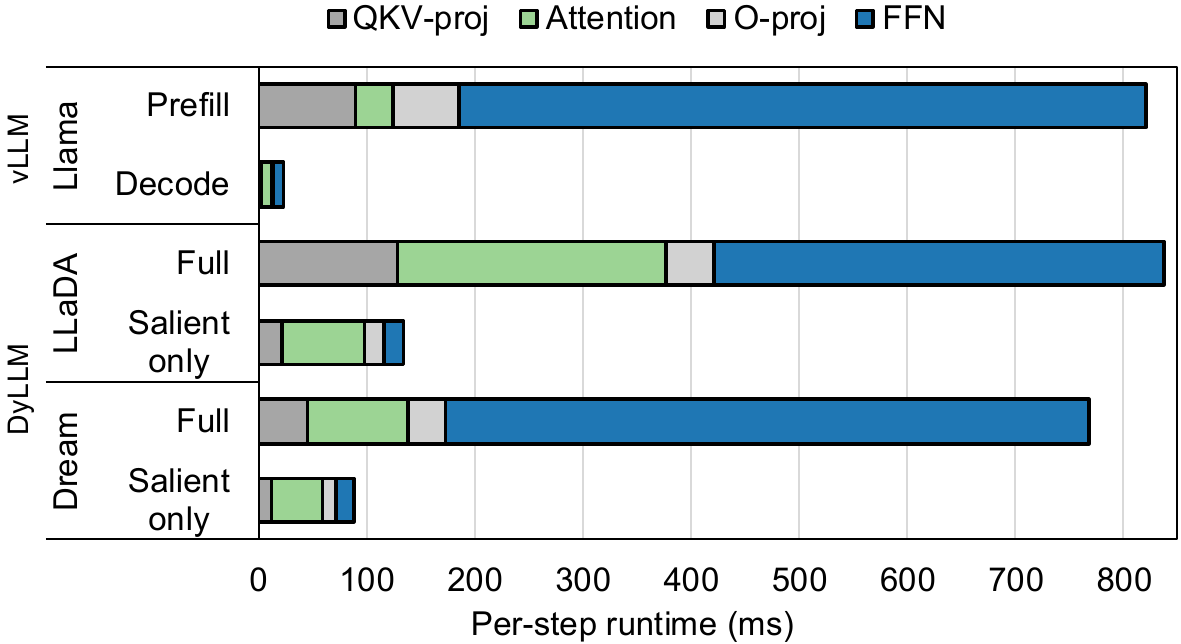}}
  \caption{
    Runtime breakdown of autoregressive decoding with vLLM~\cite{vllm}, original diffusion LLM implementation, and \ours on random GSM8K 5-shot prompts ($\text{batch size}=16$).
    Original diffusion LLM repeats the full steps, whereas \ours recomputes only salient tokens, reducing the dominant per-step overhead.
  }
  \label{fig:runtime breakdown}
  \vspace{-0.5em}
\end{figure}

Despite these advantages, the iterative refinement of underlying diffusion-based decoding introduces a significant computational bottleneck.
In ARLMs, execution follows a strict sequential order, which naturally enables incremental Key-Value (KV) caching~\cite{kvcache}; only the newly generated token requires computation at each step.
In contrast, MDLMs operate without such a fixed spatial ordering.
Because bidirectional attention updates the global context during each refinement step, the model must repeatedly process the full sequence.

Consequently, denoising steps in an MDLM resemble a ``repeated prefill'' operation, leading to prohibitive computational waste as the number of iterations increases.
\cref{fig:runtime breakdown} highlights a runtime perspective that clarifies this trade-off.
While ARLMs exhibit highly efficient decoding steps after the initial prefill, MDLMs incur a heavy compute tax at every step, primarily dominated by Feed-Forward Network (FFN) operations.

Prior works, including Fast-dLLM~\cite{fastdllm}, dKV-Cache~\cite{dkvcache}, SparseD~\cite{sparsed}, dLLM-Cache~\cite{dllmcache}, and Elastic-Cache~\cite{elasticcache}, have explored the acceleration of MDLM inference. 
Among these, caching-based techniques have emerged as a prominent area of investigation, aiming to reduce redundant computations across steps. 
However, Fast-dLLM and dKV-Cache rely on fixed caching strategies that exploit the sequential structure of language but fail to capture the diffusion-LLM-specific dynamics of layer-wise variation in representation stability.
Similarly, dLLM-Cache and Elastic-Cache utilize activation similarity to trigger cache updates, but these methods do not fully exploit the layer-wise and token-wise heterogeneity of representation stability during inference steps.

In this paper, we propose \textbf{\ours}, a training-free inference framework that accelerates MDLM decoding by exploiting layer-wise temporal sparsity.
Our work is built on the observation that across consecutive diffusion steps, the majority of token representations remain stable~\cite{dkvcache, fastdllm, dllmcache}; only a small subset, which we define as \emph{salient tokens}, undergoes meaningful transitions that contribute to the next update.

\ours identifies these salient tokens by measuring the cosine similarity of the attention contexts between adjacent steps.
Leveraging this property, \ours recomputes FFN and attention operations only for salient tokens while reusing cached results for the remaining tokens.
To further mitigate the quadratic overhead of attention, we introduce a saliency-aware approximate sparse attention scheme that approximates context updates for stable tokens with high fidelity. 
\cref{fig:runtime breakdown} shows that \ours substantially alleviates the dominant bottleneck, leading to improved throughput in salient-only steps.
Code is available at \url{https://github.com/scale-snu/DyLLM.git}.

Our key contributions are three-fold:
\setlength{\itemsep}{2pt}
\setlength{\parskip}{0pt}
\setlength{\parsep}{0pt}
\begin{itemize}
\item Layer-adaptive saliency selection: We introduce a dynamic token selection policy that identifies salient tokens at each layer, allowing the model to bypass redundant FFN computations for stable hidden states.

\item Saliency-aware approximate attention: We propose an approximate attention scheme that exploits activation sparsity to eliminate redundant context updates, reducing the complexity of the attention operation.

\item Scalable sparse inference: We demonstrate that \ours scales robustly with increasing parallel decoding degrees ($n_u$). Across LLaDA and Dream models, \ours achieves up to 7.6$\times$ and 9.6$\times$ higher throughput while preserving baseline accuracy across reasoning and code-generation benchmarks.
\end{itemize}

%% file: papers/preliminary.tex
\section{Language Model Paradigms}

We consider the generation task of language models, where an input prompt $\mathbf{x}^{P}=[x^1,...,x^{L_P}]$ of length $L_P$ is given to generate a  response $\mathbf{x}^{R}=[x^{L_P+1},...,x^{L_P+L_R}]$ of length $L_R$. Each token $x^l$ belongs to a discrete vocabulary $\mathcal{V}=\{1,...,V\}$. Following the convention, we use superscript to denote the spatial position $l$ and subscript to denote the diffusion timestep $t\in\{T,...,0\}$; e.g., the state of a token at position $l$ and diffusion step $t$ is denoted as $x_t^l$.

To maintain focus on the computational efficiency of the generation process, the following preliminaries focus on the inference phase. We refer the reader to Appendix~\ref{app:A} for further background on the theoretical foundations of Diffusion LLMs.
 
\subsection{Autoregressive Sampling: Sequential and One-Pass}
The AR generation process operates in a causal manner, predicting one token at each positional step from left to right. Concretely, given an input sequence, an AR model outputs a categorical distribution over $\mathcal{V}$, conditioned on the previously fixed tokens $\mathbf{x}^{1:l-1}$. 
$$x^l \sim p(x | \mathbf{x}^{1:l-1})$$
Once a token $x^l$ is sampled, it is appended to the context and fixed for the remainder of the generation. 
This mechanism ensures that each token is computed as an output in exactly one forward pass of the model, allowing for highly efficient KV caching. 
However, the AR process imposes a strict sequential dependency, where the $l$-th forward pass can proceed only after the $l-1$-th token has been generated. 
While recent speculative decoding techniques~\cite{speculativedecoding} attempt to mitigate this sequential bottleneck by generating multiple tokens per step, they still operate within the AR paradigm's causal constraints. 
We consider these approaches beyond the scope of this paper and instead focus on the following diffusion-based paradigm, which enables parallel decoding through iterative refinement.
\subsection{Diffusion-based Sampling: Parallel and Multi-Pass}
In contrast to ARLMs, MDLMs treat generation as a global denoising task, breaking the one-token-per-forward-pass constraint through iterative refinement.
The generation process is formulated as a \emph{filling-in-the-blanks} task, where the model starts with a fully masked response and progressively reveals tokens from the masked sequence. The parallel refinement process can be formally described as follows:

\noindent \textbf{Setup}: Given an input prompt $\mathbf{x}^{P}$ and a target response $\mathbf{x}^{R}$, the generation proceeds through a sequence of target states $\mathbf{x}_t^R$ for $t\in\{T,...,0\}$. At the initial timestep $T$, the response is initialized as a sequence of mask tokens $[M]\in\mathcal{V}$. Consequently, the initial input to the model is constructed as:
\[
\mathbf{x}_T = [\mathbf{x}^P ; \mathbf{x}^R_T] = [x^1, \dots, x^{L_P}, \underbrace{[\text{M}], \dots, [\text{M}]}_{L_R \text{ tokens}}]
\]
Throughout the sampling process, the prompt $\mathbf{x}^P$ remains invariant, providing a constant context, while the response state  $\mathbf{x}^R_t$ is updated over $T$ iterations. 

\noindent \textbf{Refinement and Unmasking}: 
At each denoising step $t$, the model processes the concatenated sequence
$\mathbf{x}_t = [\mathbf{x}^P; \mathbf{x}_t^R]$
to estimate, for every position $r$, the categorical distribution
\[
p(\hat{x}^r \mid \mathbf{x}_t), \quad r \in \{1,\dots,L_P+L_R\},
\]
where $\hat{x}^r$ denotes the decoded (unmasked) token predicted at position $r$.
In principle, all masked positions can be unmasked in a single denoising step. 

Based on these predictions, the response state transitions from $\mathbf{x}^R_t$ to $\mathbf{x}^R_{t-1}$ by replacing a subset of mask tokens with tokens sampled from the predicted probability distributions. 
The number of tokens revealed at each time step is controlled by a transition schedule $\beta_t \in [0,1]$, which determines the fraction of positions that remain masked at step $t$.

Given this budget of tokens to be revealed, the specific positions to unmask are selected according to a confidence-based sampling strategy~\cite{llada,dream}.
In particular, the model identifies a subset of positions with the highest prediction confidence (e.g., the highest logit values) to be unmasked, whereas the remaining positions are kept as $[\text{M}]$ for further refinement in the next step.

Ultimately, the diffusion approach is designed to generate a full response in a constant number of steps. 
By decoding multiple tokens per forward pass, MDLMs have the potential to significantly reduce the number of forward passes inherently required by the autoregressive paradigm, enabling much higher throughput.

\subsection{Semi-Autoregressive Decoding: Locally Parallel, Globally Sequential}
While MDLMs enable parallel refinement, simultaneously refining the entire sequence makes the denoising process more challenging, often leading to degraded generation quality.
Block-wise semi-autoregressive (semi-AR) decoding~\cite{ssd-lm} addresses this issue by scheduling the decoding process in a left-to-right, block-wise manner. 
Specifically, the target response is partitioned into fixed-length blocks of size $B$, which are decoded sequentially, while tokens within each block are refined in parallel using the MDLMs' iterative unmasking procedure.

This block-wise decoding schedule progressively fixes previously decoded blocks. 
Empirically, this strategy yields substantial accuracy improvements~\cite{apple-sar}; for example, LLaDA's performance on the GSM8K benchmark increases from 58.07 to 77.79 under semi-AR decoding (see \cref{sec:semiar}). 
This indicates that semi-AR results in better generation quality by enabling stable prefix representations across refinement steps.
As discussed next, this execution structure additionally enables efficient reuse of attention key-value representations through block-wise KV caching.

\subsection{The Efficiency Dilemma: Decoding Parallelism vs. Caching}
Interestingly, the parallel refinement capability of MDLMs introduces a fundamental efficiency dilemma. 
In AR decoding, KV caching~\cite{kvcache} is highly effective because previously computed KV representations remain unchanged, and each step only computes a single new position. 
In contrast, MDLMs recompute every position in every iteration, as the bidirectional attention mechanism requires the entire sequence to be re-processed at every denoising step $t$. 
This leads to a prohibitive per-iteration computational cost that often offsets the benefits of parallel decoding.

To mitigate this issue, several caching strategies have been proposed, ranging from temporal approaches such as periodic refresh to spatial and fine-grained activation reuse.
As an early effort, dKV-Cache~\cite{dkvcache} introduces a periodic refresh strategy that reuses cached KV states across multiple timesteps and refreshes them to correct accumulated errors.
More recently, Fast-dLLM~\cite{fastdllm} restricts the spatial scope of re-computation through block-wise caching. 
Specifically, its PrefixCache reuses the context of the prompt and previously decoded blocks during refinement, based on the observation that prefix activations remain stable across diffusion steps.
Its DualCache further extends this mechanism by additionally reusing the masked suffix tokens and updating them via periodic refresh. 
This allows the model to compute only the active target block in most iterations, thereby improving throughput with minimal accuracy loss.

In parallel to the block-based methods, another line of work explores token-level adaptive caching.
dLLM-Cache~\cite{dllmcache} extends caching to attention outputs and FFN activations, selectively recomputing tokens whose value vectors change the most between steps.
Elastic-Cache~\cite{elasticcache} further increases caching flexibility by adaptively determining, based on attention weights, which tokens should participate in the computation. 

Despite these advancements, a critical gap remains. 
Prior works address either spatial, block-wise redundancy or token-wise redundancy, but they often rely on fixed schedules or global thresholds that do not account for layer-wise diffusion dynamics. 
\ours addresses this gap by selecting subsets of tokens adaptively at each layer and denoising step. 

%% file: papers/method.tex
\begin{figure*}[t!]
\centerline{\includegraphics[width=0.96 \textwidth]{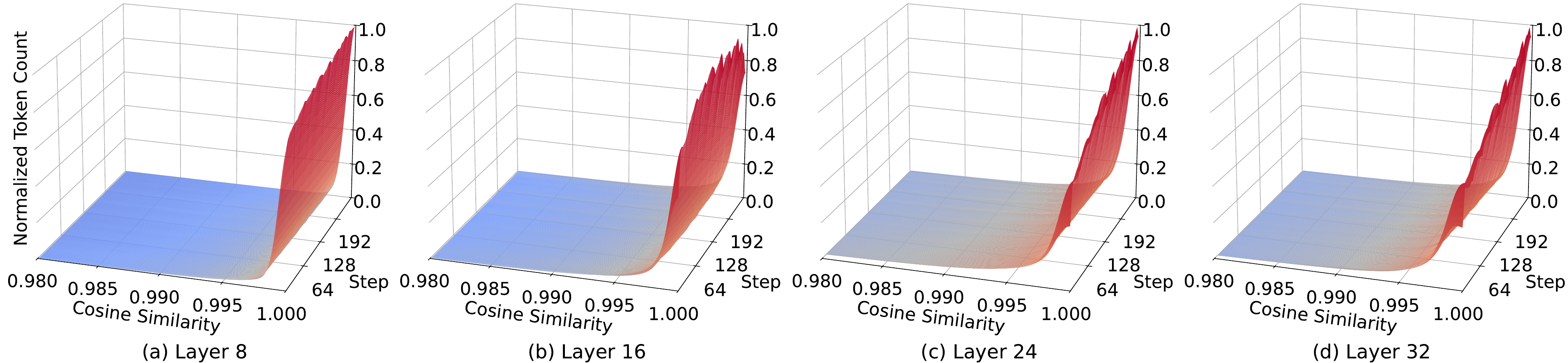}}
  \caption{
    Distribution of temporal cosine similarity $s_{t,l}$ across denoising steps and layers ($l\in\{8,16,24,32\}$), measured on LLaDA with GSM8K 5-shot prompts. 
    Most token representations remain highly similar across consecutive steps, with the density concentrated near $s_{t,l}\approx 1$. 
    In deeper layers, the distribution shows a larger low-similarity tail, indicating that more tokens undergo meaningful updates.
  }
  \label{fig:context delta}  
\end{figure*}

\section{DyLLM} 

To mitigate the computational redundancy in MDLMs, we propose \ours, a framework that adaptively recomputes only tokens whose internal representations change significantly between adjacent denoising steps. 
As illustrated in \cref{fig:runtime breakdown}, \ours accelerates inference by transitioning from an initial cache warmup phase (Full) to an accelerated refinement phase (Salient only), mirroring the prefill-and-decoding phases in ARLMs. 
\ours is built upon two key insights: (1) \emph{layer-wise temporal sparsity} of MDLM and (2) \emph{position-wise delta propagation} in Transformer blocks.

\subsection{Characterizing Temporal Sparsity}
We first analyze the temporal dynamics of internal representations during denoising.  
Let $C_{t,l}^{(i)}$ denote the attention context vector for token $i$ at layer $l$ and timestep $t$, computed via softmax-normalized attention scores and value vectors.
To quantify the evolution of $C_{t,l}$, we define \emph{temporal cosine similarity} as: 
\[s_{t,l}^{(i)} = \frac{C_{t,l}^{(i)} \cdot C_{t-1,l}^{(i)}}{\|C_{t,l}^{(i)}\| \|C_{t-1,l}^{(i)}\|}\]

\cref{fig:context delta} illustrates the distribution of $s_{t,l}$ across $T=256$ diffusion steps. 
Our analysis reveals two critical findings:
First, for the majority of tokens, the distribution is heavily skewed toward unity ($s_{t,l}^{(i)} \approx 1.0$) across all layers, indicating that attention contexts remain highly stable between consecutive iterations. 
Second, the degree of stability is layer-dependent: early layers are dominated by tokens with near-perfect similarity, whereas deep layers show a broader distribution. 

\subsection{Salient Token Selection: Identifying Semantic Deltas}
\label{sec:sal_selection}
Based on these observations, we define \emph{salient tokens} at step $t$ and layer $l$ as a set of indices $\mathcal{A}_{t,l} = \{ i \mid s_{t,l}^{(i)} < \tau \}$, where $\tau$ is a selection threshold. 
We denote its cardinality by $L_{sal}^{t,l}=|A_{t,l}|$, or simply $L_{sal}$ when the step and layer are clear from context.
During timestep $t$, for non-salient tokens ($i \notin \mathcal{A}_{t,l}$), we skip the subsequent FFN computation and reuse the previous state from the FFN output cache, while only salient tokens undergo full re-computation. 
Furthermore, Propositions \ref{prop:scale invariance} and \ref{prop:error bound} establish the error bounds of our selection metric. Complete proofs for all propositions are provided in Appendix~\ref{sec:proofs}.

\begin{proposition}[Scale Invariance under Linear Projection]
Let $W_o \in \mathbb{R}^{d \times d}$ be the output projection matrix, and $\alpha \in \mathbb{R}^+$ be a positive scaling factor. The composite operation of linear projection followed by RMSNorm satisfies:
\begin{equation}\text{RMSNorm}((\alpha C) W_o) = \text{RMSNorm}(C W_o)\end{equation}\label{prop:scale invariance}
\end{proposition}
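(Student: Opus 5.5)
The plan is to reduce the statement to two facts: linearity of the projection and degree-zero homogeneity of RMSNorm. Throughout, I treat $C$ as a row vector in $\mathbb{R}^{d}$, the attention context of a single token. For $x\in\mathbb{R}^d$, I use the standard definition
\[
\text{RMSNorm}(x)=\frac{x}{\sqrt{\frac{1}{d}\sum_{j=1}^d x_j^2}}\odot g,
\]
where $g$ is a learned gain vector.

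\textbf{Linearity.} Since $W_o$ acts linearly and has no bias term, we have $(\alpha C)W_o=\alpha\,(CW_o)$. Write $y=CW_o$.

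\textbf{Homogeneity.} We have $\sqrt{\frac1d\sum_j(\alpha y_j)^2}=|\alpha|\sqrt{\frac1d\sum_j y_j^2}=\alpha\sqrt{\frac1d\sum_j y_j^2}$, using $\alpha>0$. Substituting into the definition, the factor $\alpha$ in the numerator $\alpha y$ cancels the factor $\alpha$ in the denominator. The elementwise gain $g$ is applied after the normalization, so it is unaffected. This gives $\text{RMSNorm}(\alpha y)=\text{RMSNorm}(y)$, which is the claim.

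\textbf{Caveats.} Positivity of $\alpha$ is what lets $|\alpha|=\alpha$. For negative $\alpha$ the result would flip sign, which is why the hypothesis $\alpha\in\mathbb{R}^+$ is needed.

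The only real subtlety is the stabilizing constant $\epsilon$ inside the square root used in practical implementations. With $\epsilon>0$, the denominator becomes $\sqrt{\alpha^2\,\text{ms}(y)+\epsilon}$, and the identity holds only approximately. Here $\text{ms}(y)=\frac1d\sum_j y_j^2$ is the mean square of $y$. I would handle this by stating the proposition for the idealized $\epsilon=0$ normalization, which is what the displayed identity implicitly assumes. I would then remark that the discrepancy vanishes when $\epsilon\ll \alpha^2\,\text{ms}(y)$.

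I would also note the degenerate case $y=0$. Both sides are then undefined, or both equal $0$ under the $\epsilon$ convention, so the identity holds trivially.

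Finally, I would point out why this matters for the selection metric: it explains the use of cosine similarity. The downstream normalized signal depends on $C$ only through its direction. Hence $s_{t,l}^{(i)}=1$ implies identical post-norm outputs.
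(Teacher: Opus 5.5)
Your proof is correct and matches the paper's argument essentially step for step. Linearity gives $(\alpha C)W_o=\alpha(CW_o)$, then $\mathrm{RMS}(\alpha Y)=\alpha\,\mathrm{RMS}(Y)$ for $\alpha>0$, and the factor cancels before the gain $\mathbf{g}$ is applied. Your added remark about the stabilizing $\epsilon$ is a valid refinement that the paper omits: the identity is exact only for the idealized $\epsilon=0$ normalization.
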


Proposition~\ref{prop:scale invariance} implies that the magnitude of the attention context $C_{t,l}$ does not affect the input to the subsequent FFN layer; only the directional alignment of the projected vector $C_{t,l}W_o$ matters. Next, we establish the relationship between our metric $s_{t,l}$ and the approximation error in the final normalized output.

\begin{proposition}[Error Bound via Directional Alignment]
Let
\(
\delta = \|\mathrm{RMSNorm}(C_{t,l} W_o) - \mathrm{RMSNorm}(C_{t-1,l} W_o)\|_2
\)
denote the approximation error at layer $l$.
Assuming that the output projection $W_o$ is well-conditioned, the error admits the following upper bound:

\begin{equation}
\delta \;\le\; \kappa(W_o)\sqrt{2(1 - s_{t,l})},
\end{equation}
where
$s_{t,l}$ is the temporal cosine similarity and
$\kappa(W_o) = \sigma_{\max}(W_o)/\sigma_{\min}(W_o)$
denotes the condition number of $W_o$.
\label{prop:error bound}
\end{proposition}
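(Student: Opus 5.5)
We prove the bound by reducing everything to unit vectors and then passing through the linear map $W_o$ using its extreme singular values. Throughout we treat $C_{t,l}$ and $C_{t-1,l}$ as row vectors in $\mathbb{R}^d$, both nonzero. We write $\mathrm{RMSNorm}(x) = x/\|x\|_2$ up to a fixed elementwise gain $g$ and the $\sqrt{d}$ factor. Both of these act identically on the two terms in $\delta$, so they contribute only a multiplicative constant, $\|g\|_\infty\sqrt{d}$. The statement implicitly normalizes this constant to $1$, and we will state that convention explicitly.

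\textbf{Step 1 (reduce to unit contexts).} Set $a = C_{t,l}/\|C_{t,l}\|$ and $b = C_{t-1,l}/\|C_{t-1,l}\|$. By Proposition~\ref{prop:scale invariance}, applied with $\alpha = 1/\|C_{t,l}\|$ and $\alpha = 1/\|C_{t-1,l}\|$, we have $\mathrm{RMSNorm}(C_{t,l}W_o) = \mathrm{RMSNorm}(aW_o)$, and likewise for $b$. Since $\|a\| = \|b\| = 1$ and $a\cdot b = s_{t,l}$, we get the identity $\|a-b\|_2 = \sqrt{2(1-s_{t,l})}$. This is exactly where the cosine similarity enters.

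\textbf{Step 2 (normalization is Lipschitz away from the origin).} Let $x = aW_o$ and $y = bW_o$. We use the elementary inequality
\[
\Bigl\|\tfrac{x}{\|x\|}-\tfrac{y}{\|y\|}\Bigr\| \le \frac{2\|x-y\|}{\|x\|+\|y\|}.
\]
To prove it, we expand both sides. The left side squared equals $2-2\cos\theta$. The right side squared equals $4(\|x\|^2+\|y\|^2-2\|x\|\|y\|\cos\theta)/(\|x\|+\|y\|)^2$. After simplification the difference between them is a nonnegative multiple of $(\|x\|-\|y\|)^2(1+\cos\theta)$. As a fallback, the cruder bound $\|x-y\|/\min(\|x\|,\|y\|)$ also suffices for what follows.

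\textbf{Step 3 (singular value bounds).} Because $W_o$ is well-conditioned, in particular invertible, we have $\|x\|,\|y\| \ge \sigma_{\min}(W_o)$ for the unit vectors $a,b$. We also have $\|x-y\| = \|(a-b)W_o\| \le \sigma_{\max}(W_o)\|a-b\|$. Right multiplication by $W_o$ has the same singular values as $W_o$, so the row-vector convention causes no issue. Combining Steps 1--3 gives
\[
\delta \le \frac{2\sigma_{\max}\sqrt{2(1-s_{t,l})}}{2\sigma_{\min}} = \kappa(W_o)\sqrt{2(1-s_{t,l})}.
\]

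The main obstacle is bookkeeping rather than analysis. First, we must state precisely which RMSNorm convention makes the constant exactly $1$; otherwise the bound holds with an extra factor $\sqrt{d}\,\|g\|_\infty$. Second, we must verify the inequality in Step 2 cleanly. If that expansion proves messy, the $\min$-version of the inequality gives the same final bound directly, since the relevant denominator is still at least $\sigma_{\min}$.
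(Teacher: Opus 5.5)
Your proof is correct. Its skeleton matches the paper's: normalize the contexts to unit vectors, use $\|a-b\|_2=\sqrt{2(1-s_{t,l})}$, then apply a $\kappa(W_o)$-Lipschitz bound for $u\mapsto uW_o/\|uW_o\|$ on the unit sphere.

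The difference is that the paper only invokes this Lipschitz bound as a ``standard result in numerical linear algebra.'' You actually prove it, via $\bigl\|x/\|x\|-y/\|y\|\bigr\|\le 2\|x-y\|/(\|x\|+\|y\|)$ together with $\|uW_o\|\ge\sigma_{\min}(W_o)$ and $\|(a-b)W_o\|\le\sigma_{\max}(W_o)\|a-b\|$. Your expansion checks out: the squared right side minus the squared left side is exactly $2(\|x\|-\|y\|)^2(1+\cos\theta)/(\|x\|+\|y\|)^2\ge 0$. The $\min$-fallback is also valid. It follows from the sharper inequality, or from the $1$-Lipschitz radial projection onto the unit ball after rescaling by the smaller norm.

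You also handle the normalization constant more carefully than the paper does. Under the paper's own definition, $\mathrm{RMSNorm}(Y)=\sqrt{d}\,(Y/\|Y\|_2)\odot\mathbf{g}$. With that definition the stated bound already fails for $W_o=I$, $\mathbf{g}=\mathbf{1}$, $d>1$: there $\delta=\sqrt{d}\,\|a-b\|_2$, which exceeds $\kappa(W_o)\sqrt{2(1-s_{t,l})}=\|a-b\|_2$. The paper's proof only works because it silently sets $\delta=\|\hat Y_t-\hat Y_{t-1}\|_2$ with unit-normalized $\hat Y$. So your explicit convention, or equivalently the extra factor $\sqrt{d}\,\|\mathbf{g}\|_\infty$, is needed for the proposition to be true, not just for bookkeeping.
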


\noindent\textbf{Implication.} Proposition~\ref{prop:error bound} formalizes that the temporal shift of the FFN input is directly related to our proxy $s_{t,l}$. 
When a token's representation is near-stationary $s_{t,l}^{(i)} \to 1$, skipping the FFN computation introduces almost no error. 
By contrast, tokens with low $s_{t,l}^{(i)}$ exhibit a large temporal shift, making them salient for the final generation quality.
By thresholding on $s_{t,l}$, \ours implicitly controls the error propagation budget across model layers. 
In early layers, where most positions exhibit low sensitivity, we prune computation aggressively. In deeper, more sensitive layers, our threshold-based policy automatically expands the set of salient tokens to safeguard the final generation quality.

After $T_{full}$ full steps (set to 4 in this work) to initialize and stabilize caches, subsequent steps operate only on salient tokens.

\subsection{Propagation of Semantic Deltas Across Layers}
\begin{figure}[t!]
  \centerline{\includegraphics[width=0.95\columnwidth]{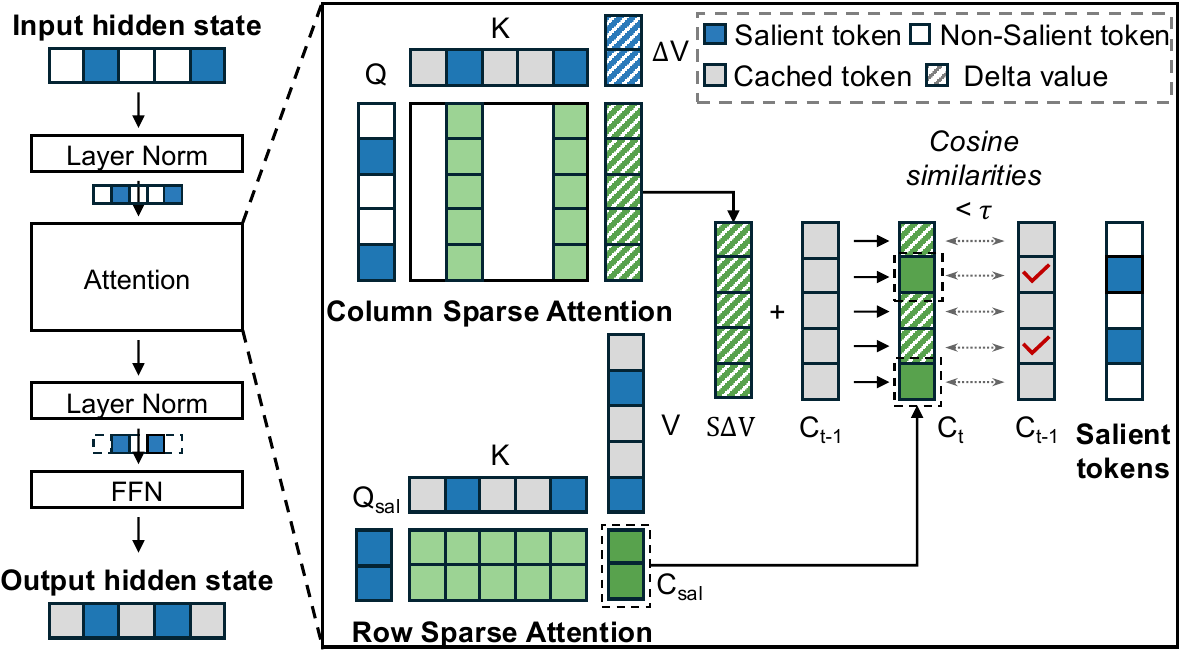}}
  \caption{
    Approximate attention in \ours. 
    \ours computes exact attention rows for salient query tokens using the cached key/value states (row-sparse path). 
    For non-salient query tokens, it approximates context updates using only the attention-score columns corresponding to salient token indices (column-sparse path).
  }
  \label{fig:approximate attention}
\end{figure}

Beyond bypassing FFN layers, identifying salient tokens enables further optimization of the attention mechanism by avoiding full context recomputation. 
Since semantic deltas are localized to a sparse subset $\mathcal{A}_{t,l}$, the updates to the attention context can be decomposed and propagated with high efficiency.

Let $\Delta C_{t,l}=C_{t,l}-C_{t-1,l}$, and $\Delta V_{t,l}=V_{t,l}-V_{t-1,l}$ represent the temporal change in the attention context and value matrix at layer $l$ and denoising step $t$. Using the attention score matrix $S_{t,l}$, this delta can be expanded as:
\begin{equation}
  \begin{aligned}
    \Delta C_{t,l} &=(S_{t-1, l} + \Delta S)(V_{t-1,l}+\Delta V_{t,l}) - S_{t-1, l}V_{t-1, l}  \\
    &= (S_{t-1, l} + \Delta S)\Delta V_{t,l} + (\Delta S)V_{t-1,l} \\
    &= S_{t,l}\Delta V_{t,l} + (\Delta S)V_{t-1,l}
  \end{aligned}  
  \label{eq:context_decomp}
\end{equation}
The decomposition in \cref{eq:context_decomp} reveals that temporal updates in one step are propagated to the next layer through two channels: (1) updates to token content via ($\Delta V$), and (2) re-routing of attention weights ($\Delta S$). Based on the active set $\mathcal{A}_{t-1,l}$ from the previous iteration, we execute a dual-path update strategy (\cref{fig:approximate attention}):

\begin{itemize}
    \item \textbf{Salient Path (Exact Row Update).}
    For tokens $i \in \mathcal{A}_{t,l-1}$, the representation undergoes a significant transition.
    We therefore fully recompute the $i$-th row of the attention output matrix $C_{t,l}$, allowing salient tokens to dynamically update their attention patterns.
    As illustrated by the row-sparse path in \cref{fig:approximate attention}, this computation is sparse along the query dimension: only the rows corresponding to salient query tokens are recomputed, while each selected query attends to all key/value vectors.

    \item \textbf{Non-salient Path (Approximate Update).}
    For tokens $i \notin \mathcal{A}_{t,l-1}$, the query remains approximately stationary, implying $\Delta S_{t,l}^{(i,\cdot)} \approx \mathbf{0}$.
    Accordingly, the update simplifies to
    $
    \Delta C_{t,l}^{(i)} \approx S_{t,l}^{(i,\cdot)} \Delta V_{t,l}$,
    where the attention weights are effectively reused from the previous iteration.
    Since $\Delta V_{t,l}$ is row-sparse, i.e., non-zero only for value vectors whose indices belong to $\mathcal{A}_{t,l-1}$, the update only requires the attention score columns associated with salient tokens.
    Thus, as illustrated by the column-sparse path in \cref{fig:approximate attention}, non-salient context tokens are updated by gathering contributions only from salient tokens, rather than computing full attention updates induced by both salient and non-salient tokens. 
    
\end{itemize}

\noindent \textbf{Computational Efficiency.} 
This strategy reduces the complexity of attention from $O(N^2 d)$ to $O(N \cdot |\mathcal{A}_{t,l-1}| d)$, where $N$ denotes the sequence length and $d$ the attention head dimension.
For the stable path, we only fetch columns of $S_{t,l}$ indexed by $\mathcal{A}_{t,l-1}$, avoiding the quadratic cost of a full attention context computation.
Since $|\mathcal{A}_{t,l-1}|$ is typically a small fraction of $N$ (\cref{fig:context delta}), this yields substantial throughput gains while maintaining high fidelity (\cref{fig:error plot}).

\begin{table}[tb!]
  \caption{Impact of Saliency-based FFN. The ``Salient'' column reports accuracy when FFN computation is restricted to salient tokens while using full attention. The ``Salient+Approx.'' column reports the full \ours configuration, where saliency-aware approximate attention is additionally applied.}
  \label{tab:salient_accuracy}
  \begin{center}
    \resizebox{0.95\columnwidth}{!}{%
    \begin{small}
        \begin{tabular}{l|llccc}
          \toprule
          \textbf{Bench.}& \textbf{Model} & \textbf{Thres.}
            &\textbf{Orig.} & \textbf{Salient} 
            & \textbf{\makecell{Salient \\ + Approx.}} \\
          \midrule
          \multirow{6}{*}{\textbf{GSM8K}}
          & \multirow{3}{*}{\textbf{LLaDA}}
          & $0.995$ & \multirow{3}{*}{77.79} & 78.09 & 78.01  \\
          & & $0.99$ & & 78.85 & \textbf{79.08} \\
          & & $0.985$ & & 78.62 & 79.00 \\
          \cmidrule(lr){2-6}
          & \multirow{3}{*}{\textbf{Dream}}
          & $0.9975$ & \multirow{3}{*}{75.59} & \textbf{79.98} & 79.30  \\
          & & $0.995$ & & \textbf{79.98} & 78.09 \\
          & & $0.99$ & & 78.70 & 76.04 \\
          \midrule
          \multirow{6}{*}{\textbf{MBPP}}
          & \multirow{3}{*}{\textbf{LLaDA}}
          & $0.995$ & \multirow{3}{*}{29.2} & 26.00 & 28.20  \\
          & & $0.99$ & & 28.00 & \textbf{30.00} \\
          & & $0.985$ & & 27.60 & 27.60 \\
          \cmidrule(lr){2-6}
          & \multirow{3}{*}{\textbf{Dream}}
          & $0.9975$ & \multirow{3}{*}{54.60} & 56.00 & \textbf{56.80} \\
          & & $0.995$ & & 56.00 & 55.40 \\
          & & $0.99$ & & 55.20 & 53.20 \\
          \midrule
          \multirow{6}{*}{\textbf{MATH}}
          & \multirow{3}{*}{\textbf{LLaDA}}
          & $0.995$ & \multirow{3}{*}{33.22} & 38.28 & \textbf{38.68}  \\
          & & $0.99$ & & 38.04 & 38.08 \\
          & & $0.985$ & & 38.40 & 38.50 \\
          \cmidrule(lr){2-6}
          & \multirow{3}{*}{\textbf{Dream}}
          & $0.9975$ & \multirow{3}{*}{37.60} & \textbf{45.54} & 45.12 \\
          & & $0.995$ & & \textbf{45.54} & 43.80 \\
          & & $0.99$ & & 44.60 & 42.94 \\
          \midrule
          \multirow{6}{*}{\textbf{MMLU-Pro}}
          & \multirow{3}{*}{\textbf{LLaDA}}
          & $0.995$ & \multirow{3}{*}{37.31} & \textbf{37.57} & 37.06  \\
          & & $0.99$ & & 37.11 & 36.42 \\
          & & $0.985$ & & 36.58 & 36.58 \\
          \cmidrule(lr){2-6}
          & \multirow{3}{*}{\textbf{Dream}}
          & $0.9975$ & \multirow{3}{*}{47.94} & \textbf{48.97} & 47.45 \\
          & & $0.995$ & & \textbf{48.97} & 47.21 \\
          & & $0.99$ & & 47.95 & 46.88 \\
          \bottomrule
        \end{tabular}
    \end{small}
    } 
  \end{center}
  \vspace{-2em}
\end{table}
\subsection{Impact of Saliency-Aware Inference on Accuracy}

Besides computational efficiency, \ours preserves model accuracy and yields modest accuracy improvements in several cases.
In \cref{tab:salient_accuracy}, restricting FFN computation to salient tokens consistently preserves and in several cases modestly improves model accuracy across representative benchmarks.

These results suggest that selectively recomputing salient tokens is sufficient to maintain generation quality while substantially reducing redundant computation.
A contributing factor is the behavior of softmax normalization.
By construction, softmax assigns strictly positive attention weights to all tokens, including those with minimal relevance.
As the sequence length increases, the cumulative contribution of such low-relevance tokens can introduce noise into the attention context~\cite{icml_softmaxtosparsemax}.

Diffusion LLM inference repeatedly refines the same sequence across denoising steps.
By focusing computation on tokens that undergo meaningful changes and suppressing contributions from non-salient positions through saliency-aware selective computation and approximation, \ours reduces the influence of irrelevant tokens during inference.

\begin{figure}[t!]
  \centerline{\includegraphics[width=\columnwidth]{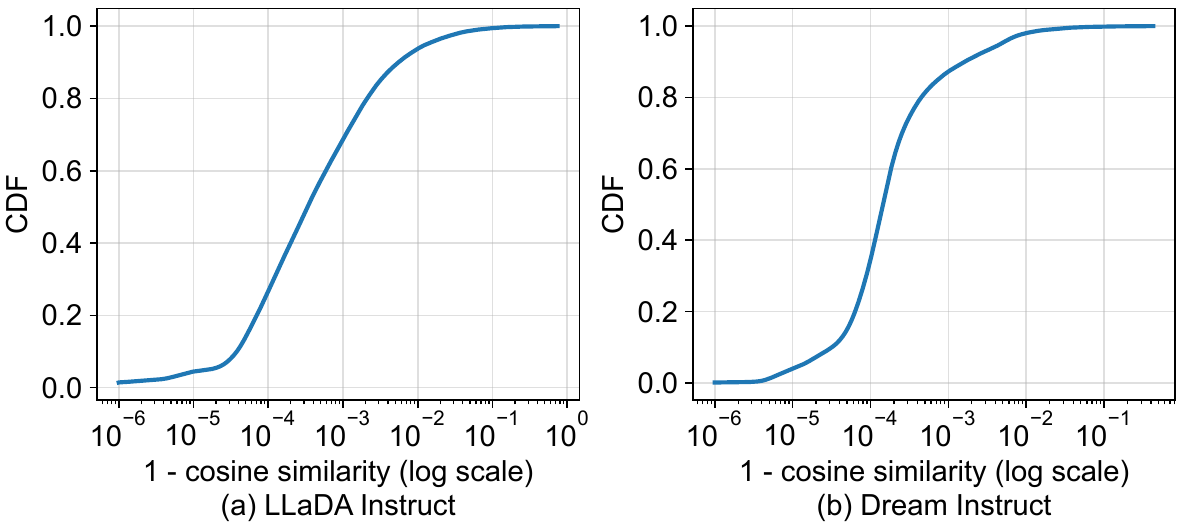}}
  \caption{
    Error of approximate attention compared to the exact attention measured by cosine similarity on LLaDA and Dream.
  }
  \label{fig:error plot}
  \vspace{-2em}
\end{figure}

\subsection{Response-only Step}

Prior works~\cite{dkvcache,fastdllm, dllmcache} reported that prompt and response tokens differ in their computational requirements. 
Theoretically, the relative distance decay property of Rotary Positional Embeddings (RoPE)~\cite{rope} imposes a locality bias on the attention mechanism.
In the context of diffusion LLMs, this characteristic suggests that significant context updates tend to be spatially clustered around the latest unmasked tokens.
Consequently, salient tokens are predominantly concentrated within the response tokens.

Leveraging this observation, \ours adopts an adaptive strategy that differentiates between prompt and response tokens, selecting salient tokens more frequently from the response than from the static prompt.
In particular, during response-only steps, \ours does not process the full sequence at every step; instead, the input is restricted to response tokens, while the full sequence (including prompt tokens) is fed at fixed intervals (e.g., every 4 steps in this work).

In the aforementioned cache-based works, refresh steps periodically recompute all tokens in the sequence, which becomes a major throughput bottleneck.
In contrast, \ours avoids such full token refreshes even in steps where the full sequence is provided as a model input; \ours restricts the expensive computation to salient tokens.

%% file: papers/evaluation.tex
\section{Evaluation}

\begin{table*}[t]
  \caption{Accuracy and throughput results across benchmarks with $n_u=1$. Accuracy is shown in black, throughput (tokens per second) in blue, and speedup relative to the original implementation in orange.}
  \label{tab:accuracy}
  \begin{center}
    \resizebox{\textwidth}{!}{%
    \begin{small}
        \begin{tabular}{l|ccccccc}
          \toprule
          \textbf{LLaDA 8B} & \textbf{Original} & \textbf{DyLLM ($\tau$=0.995)}
          & \textbf{DyLLM ($\tau$=0.99)} & \textbf{Fast-Prefix} & \textbf{Fast-Dual}
          & \textbf{dLLM-Cache} \\
          \midrule
          \multirow{2}{*}{GSM8K}    
          & 77.79 & 78.01 & 79.08 & 77.18 & 78.24 & 77.18  \\
          & \textcolor{blue}{11.47} (\textcolor{orange}{$\times 1.00$}) & \textcolor{blue}{80.15} (\textcolor{orange}{$\times6.99$}) & \textcolor{blue}{87.21} (\textcolor{orange}{$\times7.60$}) & \textcolor{blue}{51.05} (\textcolor{orange}{$\times4.45$}) & \textcolor{blue}{75.24} (\textcolor{orange}{$\times6.56$}) & \textcolor{blue}{36.77} (\textcolor{orange}{$\times3.21$}) \\
          \midrule
          \multirow{2}{*}{MBPP}     \
          & 29.20 & 28.20 & 30.00 & 25.40 & 25.40 & 29.00  \\
          & \textcolor{blue}{33.11} (\textcolor{orange}{$\times1.00$}) & \textcolor{blue}{151.27} (\textcolor{orange}{$\times4.57$}) & \textcolor{blue}{169.62} (\textcolor{orange}{$\times5.12$}) & \textcolor{blue}{116.27} (\textcolor{orange}{$\times3.51$}) & \textcolor{blue}{165.44} (\textcolor{orange}{$\times5.00$}) & \textcolor{blue}{93.04} (\textcolor{orange}{$\times2.81$}) \\
          \midrule
          \multirow{2}{*}{MATH} 
          & 33.22 & 38.68 & 38.08 & 33.22 & 32.36 & 24.70  \\
          & \textcolor{blue}{15.81} (\textcolor{orange}{$\times1.00$}) & \textcolor{blue}{96.98} (\textcolor{orange}{$\times6.13$}) & \textcolor{blue}{106.06} (\textcolor{orange}{$\times6.71$}) & \textcolor{blue}{62.12} (\textcolor{orange}{$\times3.93$}) & \textcolor{blue}{93.26} (\textcolor{orange}{$\times5.90$}) & \textcolor{blue}{36.56} (\textcolor{orange}{$\times2.31$}) \\
          \midrule
          \multirow{2}{*}{MMLU-Pro} 
          & 37.31 & 37.06  & 36.42 & 36.70 & 36.18 & 37.90  \\
          & \textcolor{blue}{11.46} (\textcolor{orange}{$\times1.00$}) & \textcolor{blue}{60.16} (\textcolor{orange}{$\times5.25$}) & \textcolor{blue}{66.44} (\textcolor{orange}{$\times5.80$}) & \textcolor{blue}{46.45} (\textcolor{orange}{$\times4.05$}) & \textcolor{blue}{64.62} (\textcolor{orange}{$\times5.64$}) & \textcolor{blue}{31.56} (\textcolor{orange}{$\times2.75$}) \\
          \midrule
          \textbf{Dream 7B} & \textbf{Original} & \textbf{DyLLM ($\tau$=0.9975)}
          & \textbf{DyLLM ($\tau$=0.995)} & \textbf{Fast-Prefix} & \textbf{Fast-Dual}
          & \textbf{dLLM-Cache} \\
          \midrule
          \multirow{2}{*}{GSM8K}
          & 75.59 & 79.30 & 78.09 & 74.45 & 68.39 & 72.40  \\
          & \textcolor{blue}{12.57} (\textcolor{orange}{$\times1.00$}) & \textcolor{blue}{111.79} (\textcolor{orange}{$\times8.89$}) & \textcolor{blue}{120.62} (\textcolor{orange}{$\times9.60$}) & \textcolor{blue}{43.32} (\textcolor{orange}{$\times3.45$}) & \textcolor{blue}{153.21} (\textcolor{orange}{$\times12.19$}) & \textcolor{blue}{46.19} (\textcolor{orange}{$\times3.67$}) \\
          \midrule
          \multirow{2}{*}{MBPP}  
          & 54.60 & 56.80 & 55.40 & 53.80 & 51.60 & 52.80  \\
          & \textcolor{blue}{19.62} (\textcolor{orange}{$\times1.00$}) & \textcolor{blue}{139.62} (\textcolor{orange}{$\times7.12$}) & \textcolor{blue}{141.56} (\textcolor{orange}{$\times7.22$}) & \textcolor{blue}{62.96} (\textcolor{orange}{$\times3.21$}) & \textcolor{blue}{205.40} (\textcolor{orange}{$\times10.47$}) & \textcolor{blue}{61.06} (\textcolor{orange}{$\times3.11$}) \\
          \midrule
          \multirow{2}{*}{MATH}  
          & 37.60 & 45.12 & 43.80 & 36.48 & 36.06 & 44.98  \\
          & \textcolor{blue}{17.64} (\textcolor{orange}{$\times 1.00$}) & \textcolor{blue}{130.57}  (\textcolor{orange}{$\times7.40$}) & \textcolor{blue}{142.34} (\textcolor{orange}{$\times 8.07$}) & \textcolor{blue}{59.16} (\textcolor{orange}{$\times3.35$}) & \textcolor{blue}{191.56} (\textcolor{orange}{$\times10.86$}) & \textcolor{blue}{48.76} (\textcolor{orange}{$\times2.76$}) \\
          \midrule
          \multirow{2}{*}{MMLU-Pro} 
          & 47.94 & 47.45 & 47.21 & 47.74 & 46.73 & 49.30  \\
          & \textcolor{blue}{12.60} (\textcolor{orange}{$\times1.00$}) & \textcolor{blue}{83.10} (\textcolor{orange}{$\times6.60$}) & \textcolor{blue}{87.98} (\textcolor{orange}{$\times6.98$}) & \textcolor{blue}{78.23} (\textcolor{orange}{$\times6.21$}) & \textcolor{blue}{128.52} (\textcolor{orange}{$\times10.20$}) & \textcolor{blue}{27.19} (\textcolor{orange}{$\times2.16$}) \\ 
          \bottomrule
        \end{tabular}
    \end{small}
    }
  \end{center}
  \vskip -0.2in
\end{table*}

\subsection{Experimental Setup}
All experiments were conducted on a single NVIDIA H100 PCIe 80GB GPU.
We evaluated the instruct variants of two representative open-source diffusion language models, LLaDA 8B~\cite{llada} and Dream 7B~\cite{dream}.
As baselines, we used the original implementations of the models.
In addition to the original implementations, we evaluated against Fast-dLLM~\cite{fastdllm} and dLLM-Cache~\cite{dllmcache} as strong baselines.
For Fast-dLLM and dLLM-Cache, we adopted the default parameters provided by their respective implementations.

For the accuracy measurements, we applied a deterministic setting using the \texttt{lm-eval}~\cite{eval-harness} library across a diverse set of benchmarks, including mathematical reasoning (GSM8K~\cite{gsm8k}, MATH~\cite{minervamath}), general knowledge (MMLU-Pro~\cite{mmlupro}), and code generation (MBPP~\cite{mbpp}). 
Throughput was measured as the average number of tokens generated per second.

\ours is implemented in PyTorch with custom CUDA kernels for sparse attention and cache operations.
Native PyTorch sparse/indexing primitives introduce significant overhead at \ours's fine-grained token sparsity, so we implement these operations directly to realize the intended algorithmic effect.
In particular, our sparse attention kernel follows a FlashAttention-like fused design~\cite{flashattention} to expose the benefit of the proposed sparse attention pattern, while the baselines use the original FlashAttention-based attention implementations.

Unless otherwise stated, we intentionally do not apply a confidence-aware decoding~\cite{fastdllm} scheme in our accuracy/throughput experiments to isolate the computational efficiency of each framework.
The scheme operates purely at the sampling level, utilizing the model's output logits to determine tokens to decode, without modifying the underlying model execution or caching mechanisms.
We evaluate compatibility with confidence-aware parallel decoding separately in \cref{sec:confidence_compatibility}.

\subsection{Main Results}

\subsubsection{Accuracy Preservation}

We evaluate the generation quality of \ours against the original diffusion implementation and state-of-the-art acceleration baselines across four diverse benchmarks.
\cref{tab:accuracy} summarizes the accuracy and throughput performance on LLaDA 8B Instruct and Dream 7B Instruct compared with the baselines.
To ensure deterministic accuracy evaluation, we use greedy token prediction and deterministically unmask tokens with the highest score according to each model's default setting: LLaDA uses the softmax confidence score, while Dream selects the unmasking position with the lowest entropy.

\ours accelerates inference by selectively recomputing tokens whose attention contexts change significantly, while reusing cached activations for tokens with stable representations.
Unlike fixed-schedule methods or static caching schemes that rely on rigid heuristics, \ours adapts the amount of computation at each layer based on token saliency, preserving critical state transitions.
This allows \ours to maintain near-lossless accuracy across all benchmarks.

Fast-dLLM determines the recomputation region using a fixed block-based rule, which can miss critical tokens that fall outside the selected block.
PrefixCache and DualCache rarely refresh the KV cache of prompt tokens, which can bias computation toward the active decoding region and miss necessary updates to previously fixed tokens including prompt tokens.
In contrast, \ours dynamically recomputes the tokens based on their importance, maintaining higher model fidelity than existing frameworks.

While dLLM-Cache yields the highest accuracy in some cases, it suffers from the difficulty of generalization, requiring extensive tuning of hyperparameters ($K_P$, $K_R$) for each specific model and dataset (see \cref{sec:dllm_cache_config}).
Both Fast-dLLM and dLLM-Cache fix the number of computed tokens in every step before execution without prompt-awareness. 
This lack of prompt awareness can lead to either recomputing unnecessary stable tokens or missing tokens that become important for a particular input.

\subsubsection{Throughput Improvements}
\ours consistently delivers substantial throughput improvements across models and benchmarks in offline batched inference by dynamically concentrating computation on a small layer-dependent subset of salient tokens, avoiding redundant processing on stable tokens.

A key distinction from prior approaches is that \ours does not rely on periodic full sequence refresh steps.
In Fast-dLLM, both PrefixCache and DualCache incur unavoidable refresh steps that recompute the entire sequence, which causes the throughput to be limited by these expensive iterations as the sequence length or parallel decoding degree increases.
dLLM-Cache also has periodic refresh steps for the full sequence and for the full set of response tokens, respectively.
In contrast, \ours maintains sparsity at every step, even when we are attending to the full sequence based on saliency.

This effect is particularly pronounced in models where FFN computation dominates runtime.
Because GQA~\cite{gqa} reduces the relative cost of attention, FFN layers account for over 70\% of Dream’s inference time, making Dream particularly well-suited for saliency-based selective FFN execution.
Consequently, \ours gains larger relative speedups on Dream than LLaDA, highlighting that the benefits of our method are amplified by modern attention designs such as GQA, which make attention more lightweight (see \cref{fig:runtime breakdown}). 

Compared to dLLM-Cache, \ours avoids fixing the number of recomputed tokens per step.
dLLM-Cache statically recomputes a predetermined fraction of tokens and relies on carefully tuned hyperparameters that vary across models and datasets.
Due to this inflexibility, \ours is $2.16$--$3.67 \times$ faster than dLLM-Cache across the evaluated benchmarks.

\cref{fig:num_salient} presents the average number of salient tokens selected by \ours in each layer, evaluated on 100 randomly sampled GSM8K 5-shot prompts. 
In \ours, the number of computed tokens varies flexibly across layers, resulting in substantially fewer tokens processed on average while preserving accuracy.
This flexibility directly results in higher throughput without sacrificing generality.
Extended accuracy and throughput results on NVIDIA B200 GPUs are provided in \cref{sec:b200_experiments}.

\begin{figure}[tb!]
    \begin{center}
        \centerline{\includegraphics[width=\columnwidth]{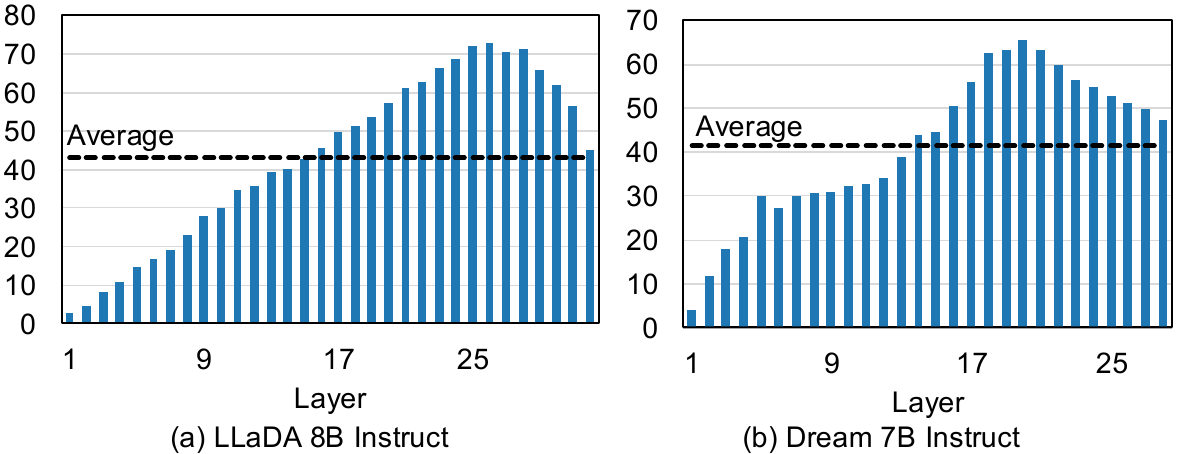}}
        \caption{The average number of salient tokens per layer with 100 random GSM8K 5-shot prompts. $L_R$ is set to 256, as in all the other experiments. The number of salient tokens ($L_{sal}$) increases for about 3/4 of layers and starts decreasing.}
     \label{fig:num_salient}
    \end{center}
    \vspace{-2em}
\end{figure}

\subsubsection{Threshold $\tau$}
\label{sec:tau_ablation}
We performed a threshold sweep to examine the trade-off between throughput and accuracy (see \cref{fig:threshold_ablation}).
Lowering the threshold $\tau$ reduces the number of computed tokens and improves throughput; however, aggressively decreasing the threshold leads to accuracy degradation, as errors accumulate over steps without being corrected.
We observed a consistent trade-off curve across models.
Based on this observation, we select a threshold of 0.99 for LLaDA and 0.995 for Dream, reflecting model-specific characteristics in computational structure and sensitivity to approximation.
Extended experiments for MATH and MMLU-Pro are shown in \cref{sec:appendix_tau}.

\subsubsection{Scalability with Increasing $n_u$}

\cref{fig:accuracy vs throughput} summarizes the accuracy-throughput trade-offs of Fast-dLLM and \ours as the degree of parallel decoding ($n_u$) increases.
Each curve reports results for $n_u \in \{1,2,4\}$, marked from left to right. 
A critical limitation of fixed refresh schedules employed by Fast-dLLM is poor scalability.
Fast-dLLM relies on periodic full refresh steps to correct approximation errors introduced by block-based caching.
As sequence length or $n_u$ increases, the overhead of full refresh steps rapidly becomes dominant, significantly limiting the throughput.

This effect can be quantified by examining the average number of tokens processed per step, including refresh steps.
For a representative configuration with 1024 prompt tokens and 256 response tokens, Fast-dLLM PrefixCache and Fast-dLLM DualCache compute on average 144 tokens and 32 tokens per non-refresh step, while both require processing 1280 tokens during refresh steps.
With $B = 32$ and a single unmasked token per step ($n_u = 1$), the average number of computed tokens per step is $\frac{144 \times 31 + 1280}{32} = 179.5$ for \text{PrefixCache} and $\frac{32 \times 31 + 1280}{32} = 71$ for \text{DualCache}.
As $n_u$ increases, the frequency of refresh steps grows correspondingly, causing the effective computation per step to rise steeply.
This problem is particularly detrimental to diffusion LLMs, whose defining advantage lies in decoding multiple tokens in parallel. 

In contrast, \ours avoids any step that requires recomputing the entire sequence.
As a result, \ours exhibits markedly better scalability with respect to $n_u$.
This trend is consistently observed throughout \cref{fig:accuracy vs throughput}.
The throughput gap between \ours and Fast-dLLM widens as $n_u$ grows in LLaDA.
Notably, in Dream, \ours is slightly slower than Fast-dLLM DualCache at $n_u=1$; however, as $n_u$ grows, \ours rapidly closes the gap and ultimately surpasses Fast-dLLM DualCache in throughput.

Overall, these results demonstrate that \ours effectively maintains the scalability characteristics that are essential to diffusion LLM inference under increasing parallel decoding without additional tuning overhead.

\begin{figure}[tb!]
    \begin{center}
        \centerline{\includegraphics[width=\columnwidth]{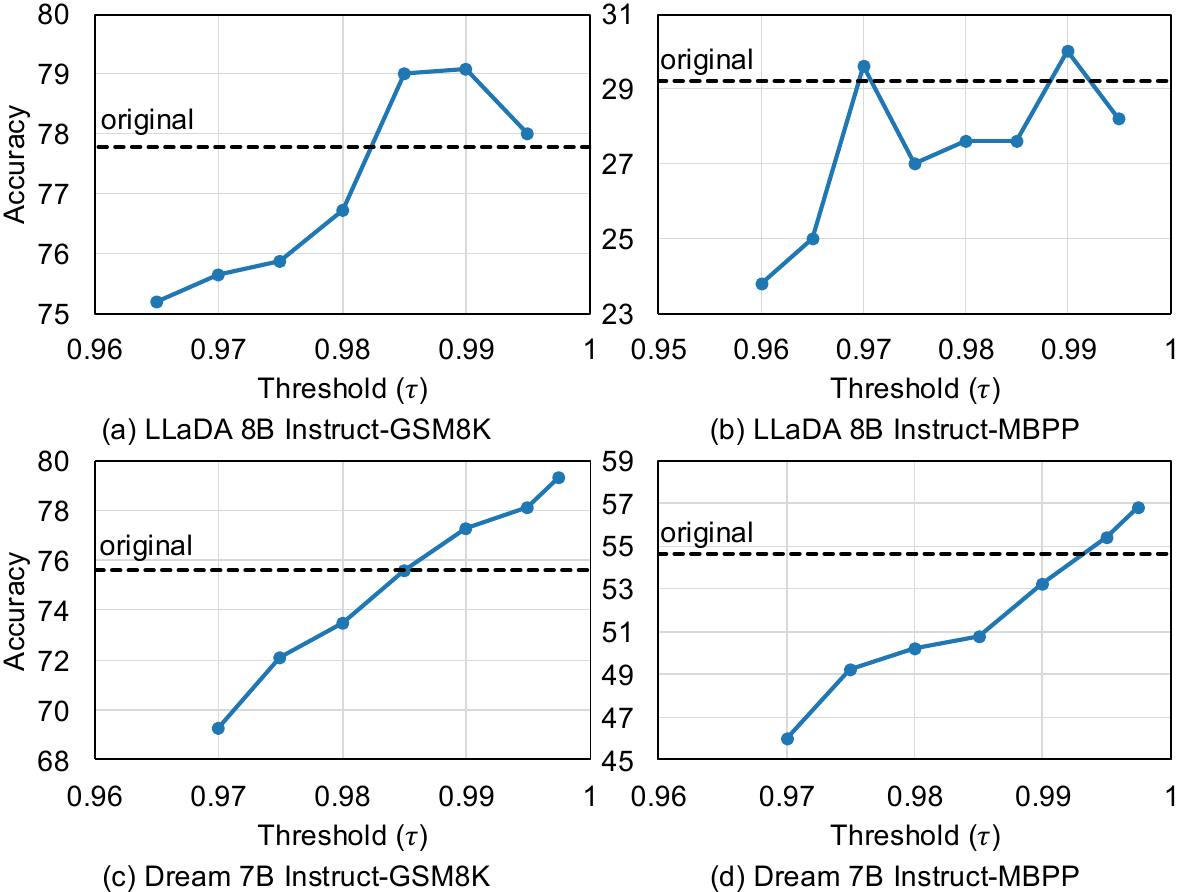}}
        \caption{
            Accuracy results varying $\tau$ across GSM8K and MBPP datasets. Accuracy generally declines as the threshold is lowered.
        }
     \label{fig:threshold_ablation}
    \end{center}
    \vspace{-2em}
\end{figure}

\begin{figure*}[t]
  \centering
  \includegraphics[width=\textwidth]{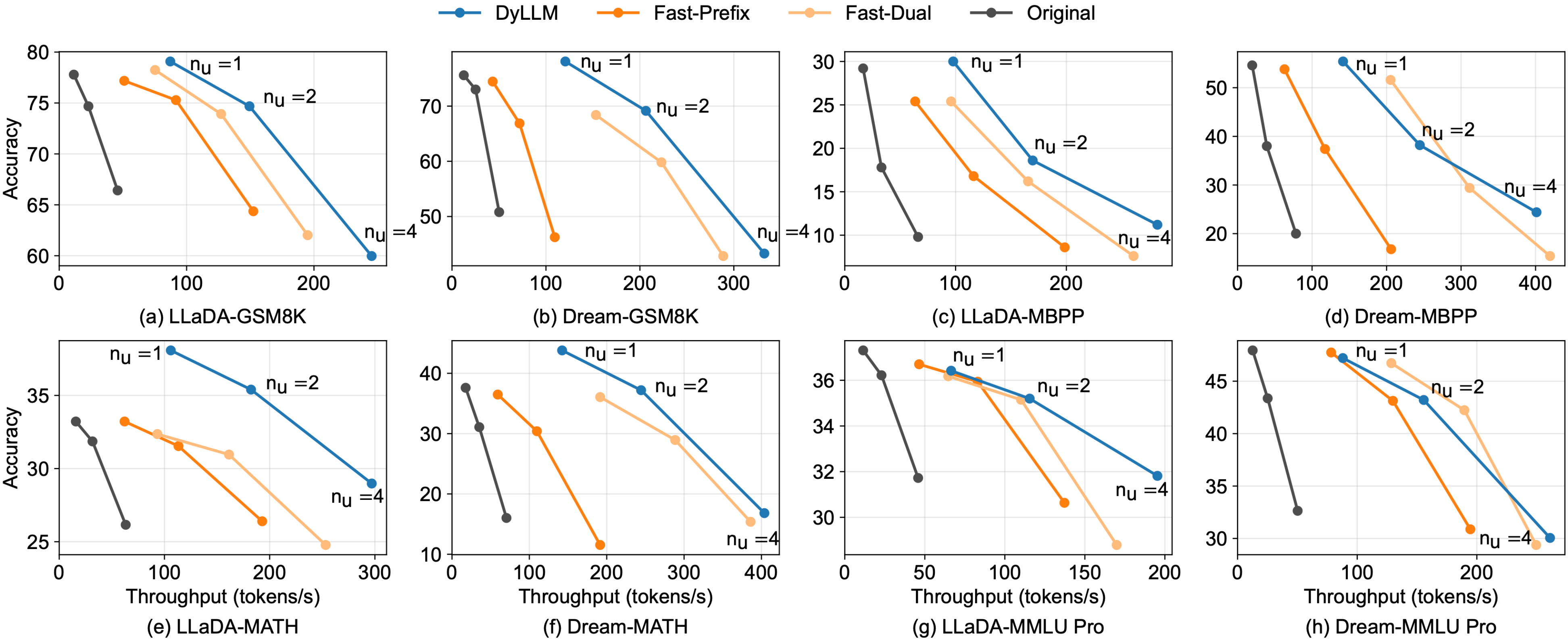}
  \caption{
    Accuracy and throughput of \ours, Fast-dLLM, and the original implementation across GSM8K, MBPP, MATH, and MMLU-Pro.
    Each curve contains three operating points corresponding to $n_u=1,2,4$, where a larger $n_u$ yields higher throughput, but lower accuracy. 
    \ours consistently preserves accuracy with high throughput compared to the strong baselines.    
  }
  \label{fig:accuracy vs throughput}
\end{figure*}

\subsubsection{Compatibility with parallel decoding schemes}
\label{sec:confidence_compatibility}

Prior works have shown that adaptive parallel decoding strategies can exploit the non-autoregressive nature of diffusion LLMs more effectively than unmasking a fixed number of tokens at each step, while preserving model accuracy~\cite{fastdllm, d2cache, dynamicdllm, hierarchydecoding}. 
To examine whether \ours can benefit from such schemes, we further combine \ours with confidence-aware parallel decoding~\cite{fastdllm} and compare it against Fast-dLLM PrefixCache and DualCache.

\cref{tab:parallel_avg_nu} shows that \ours remains compatible with confidence-aware parallel decoding.
On both LLaDA and Dream, \ours maintains or improves the average number of unmasked tokens per step (avg. $n_u$) while achieving higher accuracy than the Fast-dLLM-based baselines. 
In particular, on Dream, Fast-dLLM meaningfully degrades accuracy, whereas \ours preserves accuracy and even gains more from the parallel decoding scheme than the baseline. 
These results suggest that sparse updates do not inherently conflict with parallel decoding; rather, their effectiveness depends on preserving salient token updates throughout the denoising process.

\begin{table}[t]
    \centering
    \caption{
    Comparison of 
    the average number of unmasked tokens per step and accuracy score on LLaDA and Dream with the GSM8K 5-shot dataset.
    Bold denotes the best accuracy among all methods, and underline denotes the highest Avg.~$n_u$ among acceleration methods. 
    }
    \label{tab:parallel_avg_nu}
    \setlength{\tabcolsep}{4pt}
    \begin{tabular}{l|l|c|c}
        \toprule
            Model & Method & Avg. $n_u$ & Acc. \\
            \midrule
            \multirow{5}{*}{LLaDA}
            & Original & 3.28 & 77.86 \\
            & Fast-dLLM Prefix & 3.16 & 77.94 \\
            & Fast-dLLM Dual & 2.98 & 78.39 \\
            & DyLLM ($\tau$ = 0.99) & 3.18 & 77.03 \\
            & DyLLM ($\tau$ = 0.995) & \underline{3.20} & \textbf{78.47} \\
            \midrule
            \multirow{5}{*}{Dream}
            & Original & 3.82 & 75.51 \\
            & Fast-dLLM Prefix & 3.77 & 73.92 \\
            & Fast-dLLM Dual & 3.68 & 67.85 \\
            & DyLLM ($\tau$ = 0.995) & 3.89 & \textbf{78.62} \\
            & DyLLM ($\tau$ = 0.9975) & \underline{3.92} & 77.10 \\
        \bottomrule
    \end{tabular}
\end{table}

%% file: papers/conclusion.tex
\section{Conclusion}

This paper addresses a fundamental efficiency bottleneck in diffusion LLM inference:  iterative denoising requires repeatedly processing the full sequence.
Our work demonstrates that the redundancy inherent in diffusion steps is not uniform but highly sparse and layer-dependent.
Our framework effectively exploits this property through a layer-adaptive saliency mechanism, which dynamically isolates the small subset of tokens requiring precise updates while approximating the remaining stable context.

Empirically, \ours achieves throughput improvements of up to 7.6$\times$ and 9.6$\times$ on LLaDA and Dream across representative benchmarks, including math, code generation, and general tasks.
Crucially, unlike prior caching strategies that degrade efficiency under aggressive parallel decoding, our approach scales robustly 
with preserved accuracy.
The computational burden of diffusion LLM inference can be significantly reduced by moving from rigid full-sequence processing to adaptive, sparsity-aware computation.

Compared with an inference framework that stores only the KV cache, \ours incurs additional memory overhead. 
Let the hidden dimension be $d$ and let $g$ be the number of query heads sharing one KV head in GQA models ($g=1$ for MHA). 
Then, relative to storing only the KV cache, the memory usage increases by a factor of (2d/g + 2d) / (2d/g).
However, even with state-of-the-art caching algorithms, diffusion LLMs still process tens of times more tokens per step than autoregressive LLMs. 
As a result, GPUs often reach peak throughput at relatively small batch sizes. 
In this setting, the additional memory overhead of \ours does not necessarily cause a comparable drop in throughput.

While \ours substantially reduces redundant computation in diffusion LLM inference, its performance depends on the degree of temporal sparsity exposed by the model and on the choice of the saliency threshold $\tau$. 
In our experiments, we identify that $\tau$ is primarily model-dependent rather than task-dependent: a single threshold calibrated once for each model generalizes well across benchmarks. 
Overall, \ours demonstrates that diffusion LLM decoding can be accelerated by exploiting temporal sparsity at the token and layer levels, without sacrificing the parallel decoding benefits of the diffusion paradigm.

%% file: papers/appendix.tex
\newpage
\appendix
\onecolumn
\section{Masked Diffusion Language Models (MDLMs)}
\label{app:A}

\if{false} 
D3PM~\cite{d3pm} uncovered that the diffusion can be run directly in discrete token spaces via a masking process. 
This line of studies has since led to diffusion-based language models, such as LLaDA~\cite{llada}, Dream~\cite{dream}, and Gemini Diffusion~\cite{geminidiffusion}.

Masked Diffusion Language Models (MDLMs) define a forward diffusion process that gradually corrupts an input sequence by replacing original tokens with a \emph{special mask token}.
At each diffusion step $t$, the forward process applies a masking operation according to a fixed transition distribution $q(\mathbf{x}_t | \mathbf{x}_{t-1})$, defined by the transition matrix $Q_t$ in timestep $t$ to create data for training.
$Q_t(i, j)$ denotes the possibility of token $i$ being corrupted into token $j$. 
In the absorbing state transition matrix, which is used in most state-of-the-art MDLMs, $Q_t(i, j)$ is all zero if token $j$ is not a mask token $\text{M}$.
After masking for $T$ timesteps, it assumes that all tokens in the sequences are transitioned into mask tokens. 

\[
\begin{cases}
q({\mathbf{x}^{(i)}_t} = [\text{M}]|{\mathbf{x}^{(i)}_{t-1}}) = \beta_t \\
q(\mathbf{x}^{(i)}_t=\mathbf{x}^{(i)}_{t-1} | \mathbf{x}^{(i)}_{t-1} ) = 1- \beta_t  \\ 
\end{cases}
\quad t = 1 \ldots T
\]
where $\beta_t$ gets bigger as t increases.

\begin{equation}
    q(\mathbf{x}_t| \mathbf{x}_0) = \bar{Q}_t = Q_1Q_2 \cdots Q_t
\end{equation}

The reverse process models the conditional distribution $p_\theta(\mathbf{x}_{t-1}|\mathbf{x}_t)$, which is parameterized by a neural network trained to denoise samples produced by the forward process.
During training, the model parameters $\theta$ are optimized to approximate this reverse transition given noisy inputs $\mathbf{x}_t$.

At inference time, the learned reverse diffusion process is executed iteratively, where the model replaces masked tokens response tokens with non-mask tokens according to $p_\theta(\mathbf{x}_{t-1}|\mathbf{x}_t)$ at each diffusion step.
Repeating this procedure for $T$ denoising steps, the model recovers $\mathbf{x}_0$, corresponding to a noise-free sample drawn from the learned data distribution.
\fi

\subsection{D3PM and MDLMs}
Early attempts to apply diffusion to natural language processing primarily relied on continuous embedding diffusion~\cite{diffusion-lm} or multinomial diffusion~\cite{argmax}. 
Continuous embedding diffusion maps discrete tokens to a continuous embedding space and requires a discretization step to recover tokens, which can introduce discretization errors and semantic drift.
Multinomial diffusion operates directly over discrete token states, but commonly uses uniform transition noise, where a token can be corrupted into other vocabulary entries without reflecting linguistic structure. 
D3PM~\cite{d3pm} generalized diffusion models to discrete state spaces by formulating the forward process with structured transition matrices, including uniform, nearest-neighbor, and absorbing-state transitions.

Among these choices, the absorbing-state transition is closely related to mask-based language modeling. 
In this formulation, the mask token is treated as an absorbing state: once a token transitions to the mask state, it remains masked for the rest of the forward process. 
Thus, a partially corrupted sequence consists of clean unmasked tokens and explicit mask tokens, rather than arbitrary corrupted vocabulary tokens. 
This provides a natural connection between discrete diffusion models and masked language modeling, where the denoising model reconstructs masked positions from bidirectional context.
Modern masked diffusion language models and diffusion LLMs, such as LLaDA~\cite{llada}, Dream~\cite{dream}, and Gemini Diffusion~\cite{geminidiffusion}, build on this mask-based denoising formulation.

\subsection{Diffusion Process in MDLMs}
The diffusion process consists of a forward process that corrupts data and a reverse process that restores it.
For MDLMs, this process is commonly instantiated with an absorbing mask state: at each step, a token either remains unchanged or transitions to the mask token according to a noise schedule $\beta_t$.
In D3PM~\cite{d3pm}, the forward process over discrete token states is defined by a transition matrix $\mathbf{Q}_t$, where $\mathbf{Q}_t(i,j)$ denotes the probability of transitioning from token $i$ to token $j$ at step $t$.
Once a token reaches the mask state, it remains masked for the rest of the forward process.

The reverse process learns to invert this corruption by predicting the original tokens from a partially masked sequence.
Unlike ARLMs, which generate tokens sequentially, MDLMs can predict multiple masked positions in parallel because the denoising model observes the bidirectional context of the entire partially masked sequence.
For any token $i$ in the vocabulary (where $i \neq \text{mask}$), the transition probability of step $t$ is:

\[
\begin{cases}
    q({\mathbf{x}^{(i)}_t} = [\text{mask}]|{\mathbf{x}^{(i)}_{t-1}}) = \beta_t \\
    q(\mathbf{x}^{(i)}_t=\mathbf{x}^{(i)}_{t-1} | \mathbf{x}^{(i)}_{t-1} ) = 1- \beta_t  \\ 
\end{cases}
\quad t = 1 \ldots T
\]

where $\beta_t \in [0, 1]$ is a noise schedule that typically increases with $t$. 
Once a token has been changed into the mask token, it is ``absorbed'', meaning $q({\mathbf{x}^{(i)}_t} = [\text{mask}]|{\mathbf{x}^{(i)}_{t-1}=[\text{mask}]}) = 1$.
Because each token transitions independently, the marginal distribution $q(\mathbf{x}_t | \mathbf{x}_{0})$ can be computed in closed form using the cumulative product of transition matrices $\mathbf{Q}_t=\mathbf{Q}_1\mathbf{Q}_2...\mathbf{Q}_t$. As $t \to T$, $\bar{\mathbf{Q}}_t$ converges to a state where all probability mass is on the mask token.

The reverse process $p_\theta(\mathbf{x}_{t-1}|\mathbf{x}_t)$ aims to recover the original tokens by learning the denoising distribution. The neural network, parameterized by $\theta$, is trained to approximate the posterior $p_\theta(\mathbf{x}_{t-1}|\mathbf{x}_t, \mathbf{x}_0)$.

At inference time, the learned reverse diffusion process is executed iteratively, where the model replaces masked tokens with non-mask tokens according to $p_\theta(\mathbf{x}_{t-1}|\mathbf{x}_t)$ at each diffusion step.
Repeating this procedure for $T$ denoising steps, the model recovers $\mathbf{x}_0$, corresponding to a noise-free sample drawn from the learned data distribution.

\section{Diffusion LLM vs. Autoregressive LLM}

\subsection{Modeling}

MDLMs and autoregressive language models (ARLMs) differ in how they factorize text generation.
An ARLM defines an autoregressive factorization over positions,
$p_\theta(\mathbf{x})=\prod_i p_\theta(x_i \mid x_{<i})$, and decoding produces the next-token distribution one position at a time.
In contrast, an MDLM defines a Markov process over denoising steps, parameterizing transitions of the form $p_\theta(\mathbf{x}_{t-1}\mid \mathbf{x}_t)$.
At each denoising step, the model outputs vocabulary logits for all positions, producing per-position categorical distributions over the sequence.

This step-wise factorization enables MDLMs to update multiple positions in a single forward pass.
During inference, the model repeatedly predicts tokens for masked positions and applies an unmasking/remasking policy that keeps high-confidence predictions while refining low-confidence positions in later steps.
After $T$ denoising steps, all output positions are determined.

\subsection{Computation}

\begin{figure}[ht]
  \centerline{\includegraphics[width=\columnwidth]{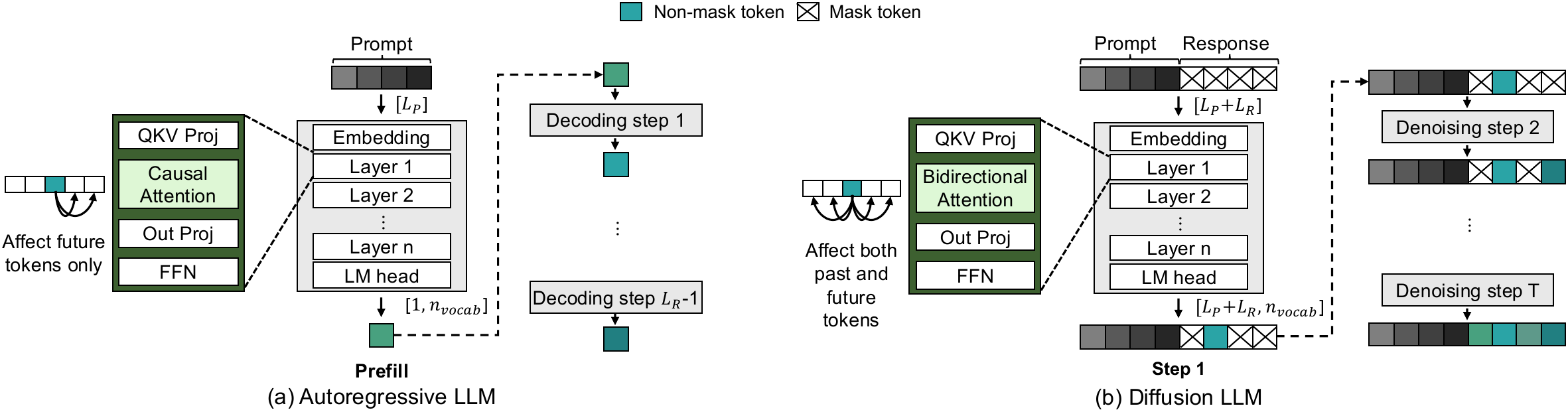}}
  \caption{
    Comparison of MDLM architecture and ARLM architecture. 
  }
  \label{fig:dllm architecture}
\end{figure}

\cref{fig:dllm architecture} compares the architecture of ARLMs and MDLMs.
MDLMs employ a Transformer-based architecture~\cite{attentionisallyouneed} akin to conventional  ARLMs, but exhibit two fundamental distinctions in terms of computational operations. 
First, MDLMs take the entire sequence---comprising both the prompt and the response---serving as input at every timestep.
Similar to standard ARLMs, the MDLM architecture consists of multiple layers of self-attention and feed-forward networks (FFN) through which hidden states propagate sequentially.
Consequently, the computational characteristics of a single decoding step in an MDLM are analogous to the prefill phase of an ARLM, where the entire context is processed simultaneously.
The original implementation of diffusion LLM inference is highly compute-intensive, making it difficult to achieve speed comparable to that of ARLMs. 

To make diffusion LLM inference faster, one option is to reduce $T$, thereby increasing the number of tokens decoded per step ($n_u = L_R/T$); however, this approach can substantially degrade response quality~\cite{simpleandeffectiveMDLM}.
With increasing model capacity, it may become feasible to substantially increase $n_u$~\cite{consistencymodel}, but the loss in generation quality remains unavoidable.

Second, these models utilize bidirectional attention, not causal attention. 
While ARLMs enforce a causal constraint, in which each token attends only to its predecessors, diffusion language models allow every token to attend to all other tokens in the sequence, both preceding and succeeding. 
Consequently, once a token is unmasked, all key and value representations are affected in the following step, except the first layer. 
This creates a divergence in activation values for the same token across different steps, meaning that applying the static KV caching mechanism used in ARLMs would lead to the accumulation of error over time. 


\section{Core Algorithm of \ours}
In this section, we provide a comprehensive breakdown of the \ours algorithm. 
The approach is designed to accelerate diffusion LLM inference by identifying \emph{salient tokens} whose representations meaningfully changed from the previous step, and skipping redundant computations for the remainder of the sequence. 

\subsection{DyLLM}
\cref{alg:DyLLM} presents the high-level orchestration of the generation process. 
Given prompt tokens $P$ of length $L_P$, \ours generates response tokens $R$ of length $L_R$. 
The response sequence $R$ is initialized with $L_R$ mask tokens, which are progressively decoded into non-mask tokens over $T_{total}$ inference steps.

\ours divides inference into two phases: FullStep and SparseStep.
During the first $T_{full}$ steps, the model executes FullStep, which computes QKV projections, attention, output projection, and FFN for all tokens, while caching intermediate activation matrices.

After these full steps, \ours switches to SparseStep for the remaining steps.
SparseStep reduces computation by reusing cached activations and selectively recomputing KV projections and FFN outputs only for salient tokens.
For attention, SparseStep combines exact recomputation for salient tokens with approximate updates for non-salient tokens, further reducing redundant computation.

In addition, \ours exploits response-only steps.
Since prompt tokens remain fixed throughout generation, \ours processes only the response sequence $R$ for response-only steps, which account for 75\% of the total steps.
For the remaining steps, the full sequence obtained by concatenating $P$ and $R$ is used as input.

\begin{algorithm}
  \caption{DyLLM}
  \label{alg:DyLLM}
  \begin{algorithmic}
     \STATE {\bfseries Input:} Integer ids of prompt $P$ of length $L_P$, response length $L_R$, total number of steps $T_{total}$, number of full steps $T_{full}$, cosine similarity threshold $\tau \in \mathbb{R}$
  \end{algorithmic}
  \begin{algorithmic}[1]
   \STATE Initialize $R \gets [\mathrm{mask}]  * L_R$
   \STATE Initialize $\mathbf{K_{cache}, V_{cache},C_{cache}, FFN\_OUT_{cache}}$
   \STATE Initialize $\mathrm{idx_{sal}} \gets \mathrm{None}$ 
   \FOR{$t = 0$ {\bfseries to} $T_{total} - 1$}
      \IF{$t < T_{full}$}
         \STATE $x \gets \mathrm{Concat}(P, R)$
         \STATE $x \gets \mathrm{FullStep}(x, \mathbf{K_{cache},  V_{cache}, C_{cache},  FFN\_OUT_{cache}})$
      \ELSE
         \IF{$t \ \% \ 4 = 0$}
            \STATE $x \gets \mathrm{Concat}(P, R)$
         \ELSE
            \STATE $x \gets R$
         \ENDIF

         \IF{$\mathrm{idx_{sal}}=\mathrm{None}$}
            \STATE $\mathrm{idx_{sal}} \gets [\mathrm{decoded\ positions}]$
         \ENDIF
            
         \STATE $x, \mathrm{idx_{sal}} \gets \mathrm{SparseStep}(x, \mathbf{K_{cache},  V_{cache}, C_{cache}, FFN\_OUT_{cache}, \text{idx}_{\text{sal}}, \tau})$
      \ENDIF
      \STATE \text{decoded\_tokens, decoded\_positions} $\gets \mathrm{process\_logit}(x)$
      \STATE $R[\text{decoded\_positions}] \gets \text{decoded\_tokens}$
   \ENDFOR
  \end{algorithmic}
\end{algorithm}

\subsection{Full Step}

FullStep follows the standard transformer forward pass while augmenting it with activation caching.
It takes the full sequence, including both prompt tokens $P$ and response tokens $R$, as input.
Unlike KV caching in autoregressive models, FullStep additionally caches the attention context and FFN output, as shown in \cref{alg:full_step}.
These caches, denoted as $\mathbf{K_{cache}, V_{cache}, C_{cache}}$, and $\mathbf{FFN\_OUT_{cache}}$, are reused during SparseStep to avoid redundant recomputation.

\begin{algorithm}
   \caption{FullStep}
   \label{alg:full_step}
   \begin{algorithmic}
        \STATE {\bfseries Input:} Integer ids $x$ of length $L_P+L_R$, caches $\mathbf{K_{cache}, V_{cache}, C_{cache},  FFN\_OUT_{cache}}$
    \end{algorithmic}
    \begin{algorithmic}[1]
        \STATE $\mathbf{x} \gets \mathrm{embedding}(x)$
        \FOR{$l=1$ {\bfseries to} $n_{layer}$}
            \STATE $\mathbf{x} \gets \mathrm{layernorm}(\mathbf{x})$
            \STATE $\mathbf{Q, K, V} \gets \mathrm{q\_proj}(\mathbf{x}), \mathrm{k\_proj}(\mathbf{x}), \mathrm{v\_proj}(\mathbf{x})$  $\quad \textcolor{blue}{// \ [L_P+L_{R}, d_{hidden}]}$ 
            \STATE $\mathbf{C} \gets \mathrm{attention}(\mathbf{Q, K, V})$ 
            \STATE $\mathbf{x} \gets \mathrm{layernorm(out\_proj}(\mathbf{C}))$
            \STATE $\mathbf{x} \gets \mathrm{FFN}(\mathbf{x})$
            \STATE $\mathbf{K_{cache}, V_{cache}, C_{cache}, FFN\_OUT_{cache}} \gets \mathbf{K, V, C, x}$ 
        \ENDFOR
        \STATE $\mathrm{logit} \gets \mathrm{lm\_head}(\mathbf{x})$
        \STATE {\bfseries return} $\mathrm{logit}$
    \end{algorithmic}
\end{algorithm}

\subsection{Sparse Step}

The core efficiency of \ours comes from SparseStep.
As shown in \cref{alg:sparse_step}, SparseStep performs partial updates only on \emph{salient tokens}, whose attention contexts undergo substantial drift from the previous step.
At each step, SparseStep reuses activation caches from previous steps and updates only the subset of tokens identified as salient.

SparseStep first projects the hidden states of salient tokens into new key and value representations.
For non-salient tokens, it reuses the cached key and value representations.
The updated representations for salient tokens are then merged with the cached representations of non-salient tokens to construct the full $\mathbf{K}$ and $\mathbf{V}$ matrices.

Using these updated KV matrices, SparseStep recomputes the attention context exactly for salient queries.
For non-salient tokens, instead of recomputing the full attention output, SparseStep estimates the residual change in the attention context using approximate attention.
The new attention context is then compared with the cached context via cosine similarity.
Tokens whose context similarity falls below the threshold $\tau$ are marked as salient for the next layer.

Finally, SparseStep applies the FFN only to the newly identified salient tokens.
For non-salient tokens, it directly reuses the cached FFN output.
The updated $\mathbf{K}$, $\mathbf{V}$, attention context, and FFN output are cached for subsequent layers and steps.

\begin{algorithm}
   \caption{SparseStep}
   \label{alg:sparse_step}
    \begin{algorithmic}
   \STATE {\bfseries Input:} Integer ids $x$ of length $L$, caches $\mathbf{K_{cache}, V_{cache}, C_{cache},  FFN\_OUT_{cache}}$, salient token indices $\text{idx}_{\text{sal}}$, cosine similarity threshold $\tau$
\end{algorithmic}
\begin{algorithmic}[1]
   \STATE $\mathbf{x} \gets \mathrm{embedding}(x) \quad \textcolor{blue}{\text{// } [L]; \text{ } L  \text{ is } L_R \text{ if response-only else } L_P+L_R}$ 
   \FOR{$l=1$ {\bfseries to} $n_{layer}$}
        \STATE $\mathbf{x} \gets \mathrm{layernorm}(\mathbf{x})$
        \STATE $\mathbf{Q} \gets \mathrm{q\_proj}(\mathbf{x}) \quad \textcolor{blue}{// \ [L, d_{hidden}]}$ 

        \STATE $\mathbf{K}[\text{idx}_{\text{sal}}] \gets \text{k\_proj}(\mathbf{x}[\text{idx}_{\text{sal}}]), \ \mathbf{K}[{\mathrm{idx}_{\mathrm{sal}}}^{\mathrm{c}}] \gets \mathbf{K}_{\text{cache}}[{\mathrm{idx}_{\mathrm{sal}}}^{\mathrm{c}}]  \quad \textcolor{blue}{// \ [L, d_{hidden}]}$
        \STATE $\mathbf{V}[\text{idx}_{\text{sal}}] \gets \text{v\_proj}(\mathbf{x}[\text{idx}_{\text{sal}}]), \ \mathbf{V}[{\mathrm{idx}_{\mathrm{sal}}}^{\mathrm{c}}] \gets \mathbf{V}_{\text{cache}}[{\mathrm{idx}_{\mathrm{sal}}}^{\mathrm{c}}] \quad \textcolor{blue}{// \ [L, d_{hidden}]}$

        \STATE $\Delta \mathbf{V} \gets \mathbf{V[\text{idx}_{\text{sal}}] } - \mathbf{V_{cache}}[\mathrm{\text{idx}_{\text{sal}}}] \quad \textcolor{blue}{// \ [\mathrm{len(idx_{sal})}, d_{hidden}]}$
        \STATE $\mathbf{C_{sal}} \gets \mathrm{attention}(\mathbf{Q[\text{idx}_{\text{sal}}], K, V}) \quad \textcolor{blue}{// \ [\mathrm{len(idx_{sal})}, d_{hidden}]}$ 
        \STATE $\Delta \mathbf{C} \gets \mathrm{approximate\_attention}(\mathbf{Q, K, \Delta V}) \quad \textcolor{blue}{// \ [L, d_{hidden}]}$
        
        \STATE $\mathbf{C}[\mathrm{\text{idx}_{\text{sal}}}] \gets \mathbf{C_{sal}}, \ \mathbf{C}[\mathrm{{\mathrm{idx}_{\mathrm{sal}}}^{\mathrm{c}}}] \gets \mathbf{C_{cache}[\mathrm{{\mathrm{idx}_{\mathrm{sal}}}^{\mathrm{c}}}] + \Delta C[\mathrm{{\mathrm{idx}_{\mathrm{sal}}}^{\mathrm{c}}}]}\  $
        \STATE $\mathrm{\text{idx}_{\text{sal}}} \gets \mathrm{where}(\mathrm{cosine\_similarity}(\mathbf{C, C_{cache}}) < \tau)$
        \STATE $\mathbf{x} \gets \mathrm{layernorm(out\_proj}(\mathbf{C}))$
        \STATE ${\mathbf{x}[\mathrm{\text{idx}_{\text{sal}}}] \gets \mathrm{FFN}(\mathbf{x}\mathrm{[\text{idx}_{\text{sal}}]})}$, ${\mathbf{x[\mathrm{{\mathrm{idx}_{\mathrm{sal}}}^{\mathrm{c}}}]} \gets \mathbf{FFN\_OUT_{cache}}}\mathrm{[{\mathrm{idx}_{\mathrm{sal}}}^{\mathrm{c}}]}$
        \STATE $\mathbf{K_{cache}, V_{cache}, C_{cache},  FFN\_OUT_{cache} \gets K, V, C, x}$
   \ENDFOR
   \STATE $\mathrm{logit} \gets \mathrm{lm\_head}(\mathbf{x})$
   \STATE {\bfseries return} $\text{logit}, \mathrm{idx_{sal}}$
\end{algorithmic}
\end{algorithm}

\subsection{Approximate Attention}

\cref{alg:approx_attn} describes how approximate attention computes the residual update of the attention context.
The query matrix $\mathbf{Q}$ contains either only response tokens or the concatenation of prompt and response tokens, depending on whether the current step is response-only.
In contrast, the key matrix $\mathbf{K}$ always covers the full sequence of length $L_P + L_R$.

Approximate attention first computes the attention scores by multiplying $\mathbf{Q}$ and $\mathbf{K}^{\top}$, followed by the softmax operation.
It then extracts only the columns of the attention matrix corresponding to salient tokens.
By multiplying these selected attention weights with the value residual $\Delta \mathbf{V}$ of salient tokens, approximate attention obtains the residual update $\Delta \mathbf{C}$ for the attention context.
This allows \ours to propagate the effect of salient-token updates to all tokens without recomputing the full attention output.

\begin{algorithm}
    \caption{Approximate Attention}
    \label{alg:approx_attn}
    \begin{algorithmic}
       \STATE {\bfseries Input:} Query $\mathbf{Q}$ $\in \mathbb{R}^{L \times d}$, key $\mathbf{K}$ $ \in \mathbb{R}^{(L_P+L_R) \times d}$, value delta $\Delta \mathbf{V} \in \mathbb{R}^{\mathrm{len(idx_{sal})} \times d}$, salient token indices $\text{idx}_{\text{sal}}$ 
    \end{algorithmic}
    \begin{algorithmic}[1]
        \STATE $\mathbf{S \gets Q K^\top}$ \quad $\textcolor{blue}{// \ [L, L_P+L_R]}$
        \STATE $\mathbf{A} \gets \mathrm{Softmax}(\mathbf{S})$ \quad $\textcolor{blue}{// \ [L, L_P+L_R]}$
        \STATE $\mathbf{A_{\mathrm{sal}}} \gets \mathbf{A}[:, \mathrm{\text{idx}_{\text{sal}}}]$ \quad $\textcolor{blue}{// \ [L, \mathrm{len(idx_{sal})}]}$
        \STATE $\mathbf{\Delta C \gets A_{\mathrm{sal}} \Delta V}$ \quad $\textcolor{blue}{// \ [L, d_{hidden}]}$
        \STATE {\bfseries return} $\mathbf{\Delta C}$
    \end{algorithmic}
\end{algorithm}

\section{Extended Ablation Study}
This section details additional results not shown in the main paper.

\subsection{Impact of $\tau$ on accuracy }
\label{sec:appendix_tau}
Extended from \cref{fig:threshold_ablation}, \cref{fig:appendix_threshold_ablation} illustrates the accuracy on MATH and MMLU-Pro benchmarks with differing threshold values $\tau$ to examine the impact of the threshold on generation quality.
We evaluated $\tau\in\{0.98, 0.985, 0.99, 0.995\}$ for LLaDA, and $\tau\in\{0.985, 0.99, 0.995, 0.9975\}$ for Dream.
Since lowering $\tau$ results in selecting fewer salient tokens, thereby bypassing more computations and relying more heavily on cached results, we anticipate that a drastic decrease in $\tau$ will eventually degrade accuracy, consistent with the trend observed in \cref{sec:tau_ablation}.

At a finer granularity, increasing $\tau$ does not guarantee better generation quality, as it may add additional tokens irrelevant to the context.
This finding is consistent with prior reports that caching frameworks~\cite{dkvcache, fastdllm, dllmcache}, including \ours, show better response quality than the original models in some configurations.

\begin{figure}[tb!]
    \begin{center}
        \centerline{\includegraphics[width=0.6\columnwidth]{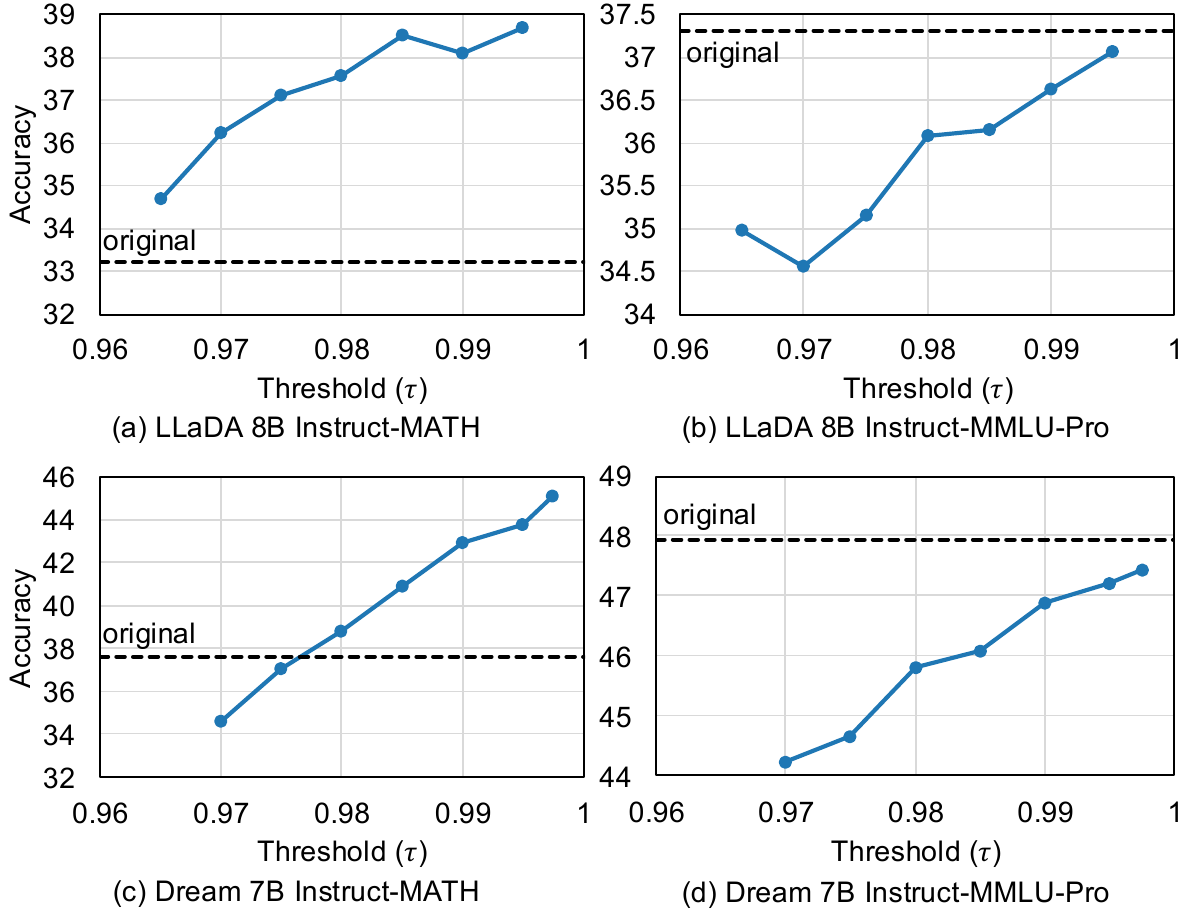}}
        \caption{
            Accuracy results varying $\tau$ across MATH and MMLU-Pro benchmark.
        }
     \label{fig:appendix_threshold_ablation}
    \end{center}
\end{figure}

\subsection{Average $L_{sal}$ trend across layers and steps}
\label{sec:appendix_numsal}

\cref{fig:numsal_others} illustrates extended results from \cref{fig:num_salient}; the average number of salient tokens per layer across the MBPP, MATH, and MMLU-Pro benchmarks. 
Notably, the initial layers exhibit a markedly lower number of salient tokens.

This trend aligns with the dynamics identified in Elastic-Cache~\cite{elasticcache}, where shallow layers are relatively more stable, exhibiting small differences between steps, whereas deeper layers continue to evolve as the effect of the updated tokens propagates to other tokens throughout the step.
While Elastic-Cache computes all tokens from a certain layer onward, we flexibly select a subset of tokens at every layer, maintaining a low average number of computed tokens without compromising model accuracy.

We further observe that this tendency is consistent across inference steps. \cref{tab:stepwise_numsal} reports the average of $L_{sal}$ at different steps for each layer. 
As generation progresses and fewer masked tokens remain, $L_{sal}$ generally decreases, indicating that \ours naturally adapts its computation to the remaining uncertainty in the sequence. 
Importantly, the relative layer-wise pattern of saliency remains stable across steps: shallow layers consistently require fewer salient updates, while deeper layers account for a larger fraction of the computation. 
This stability suggests that the proposed saliency-based selection is robust to different response lengths and numbers of denoising steps, rather than being tied to a particular generation schedule.

\begin{table*}[t]
\centering
\caption{Layer-wise average of $L_{sal}$ across steps.}
\label{tab:stepwise_numsal}
\resizebox{0.96\textwidth}{!}{
\begin{tabular}{c|cccc|cccc|cccc|cccc}
\toprule
\multicolumn{17}{c}{\textbf{LLaDA}} \\
\midrule
\multirow{2}{*}{Step}
& \multicolumn{4}{c|}{GSM8K}
& \multicolumn{4}{c|}{MBPP}
& \multicolumn{4}{c|}{MATH}
& \multicolumn{4}{c}{MMLU-Pro} \\
\cmidrule(lr){2-5}
\cmidrule(lr){6-9}
\cmidrule(lr){10-13}
\cmidrule(lr){14-17}
& L8 & L16 & L24 & L32
& L8 & L16 & L24 & L32
& L8 & L16 & L24 & L32
& L8 & L16 & L24 & L32 \\
\midrule
0--15
& 33.4 & 50.2 & 65.4 & 23.4
& 52.2 & 63.4 & 107.4 & 54.3
& 36.3 & 45.0 & 64.4 & 28.6
& 40.1 & 56.6 & 82.8 & 34.8 \\
64--79
& 25.5 & 48.5 & 72.8 & 43.3
& 22.3 & 49.5 & 77.5 & 46.7
& 26.6 & 46.2 & 67.6 & 40.2
& 23.3 & 58.4 & 92.7 & 56.9 \\
128--143
& 21.6 & 43.0 & 59.4 & 40.9
& 22.8 & 46.3 & 68.9 & 46.7
& 23.8 & 44.1 & 63.2 & 42.1
& 20.8 & 45.9 & 71.8 & 44.4 \\
192--207
& 17.8 & 36.3 & 51.3 & 41.8
& 19.2 & 41.6 & 66.3 & 50.2
& 20.3 & 38.4 & 59.7 & 48.5
& 17.3 & 39.2 & 68.0 & 49.4 \\
\bottomrule
\end{tabular}
}

\vspace{0.8em}

\resizebox{0.96\textwidth}{!}{
\begin{tabular}{c|cccc|cccc|cccc|cccc}
\toprule
\multicolumn{17}{c}{\textbf{Dream}} \\
\midrule
\multirow{2}{*}{Step}
& \multicolumn{4}{c|}{GSM8K}
& \multicolumn{4}{c|}{MBPP}
& \multicolumn{4}{c|}{MATH}
& \multicolumn{4}{c}{MMLU-Pro} \\
\cmidrule(lr){2-5}
\cmidrule(lr){6-9}
\cmidrule(lr){10-13}
\cmidrule(lr){14-17}
& L7 & L14 & L21 & L28
& L7 & L14 & L21 & L28
& L7 & L14 & L21 & L28
& L7 & L14 & L21 & L28 \\
\midrule
0--15
& 28.6 & 40.0 & 71.6 & 35.2
& 40.1 & 65.4 & 89.3 & 66.0
& 30.2 & 54.8 & 83.6 & 48.5
& 45.2 & 72.4 & 99.9 & 56.6 \\
64--79
& 42.9 & 52.5 & 77.4 & 54.3
& 35.9 & 45.5 & 64.5 & 52.9
& 33.2 & 49.7 & 78.9 & 50.3
& 47.1 & 56.9 & 83.8 & 61.2 \\
128--143
& 36.7 & 46.0 & 64.5 & 48.8
& 22.2 & 28.3 & 48.7 & 31.2
& 41.4 & 52.3 & 73.3 & 50.5
& 48.5 & 56.4 & 84.3 & 58.0 \\
192--207
& 25.6 & 34.3 & 54.4 & 39.4
& 22.4 & 23.9 & 34.4 & 25.6
& 35.3 & 42.1 & 61.6 & 45.6
& 42.1 & 50.7 & 79.1 & 55.4 \\
\bottomrule
\end{tabular}
}
\end{table*}

\begin{figure}[ht]
  \centerline{\includegraphics[width=0.9\columnwidth]{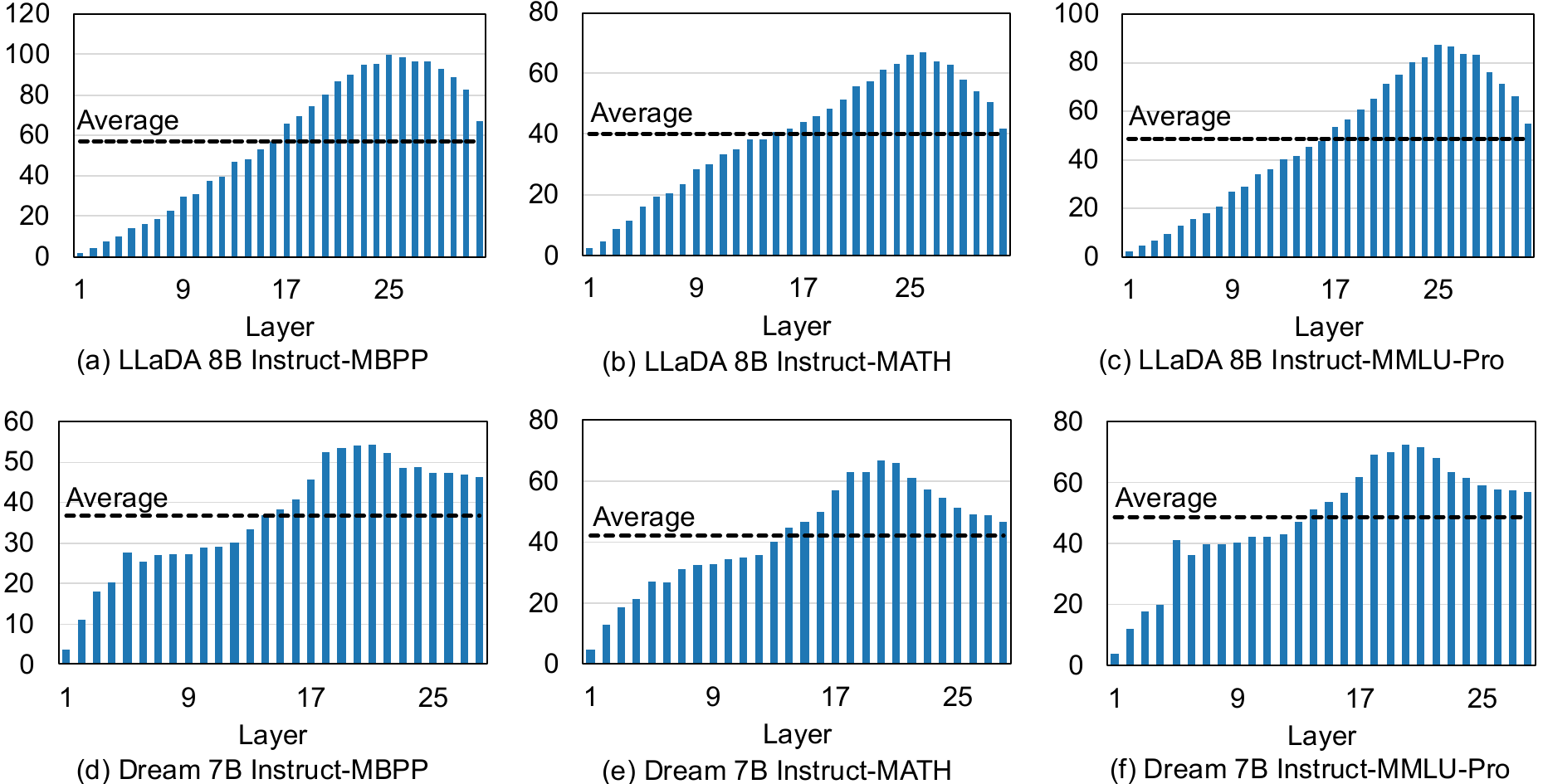}}
  \caption{
    The average of $L_{sal}$ per layer across MBPP, MATH, and MMLU-Pro.
    The same layer-wise trend observed on GSM8K also appears across these benchmarks.
  }
  \label{fig:numsal_others}
\end{figure}

\subsection{Semi-AR vs. Non-Semi-AR Accuracy Comparison}
\label{sec:semiar}
\cref{tab:semiar-vs-nonsemiar} shows the effect of Semi-AR decoding on the original LLaDA and Dream implementations on the GSM8K 5-shot benchmark. 
The block size $B$ was set to 32. 
Semi-AR decoding does not alter the computations, but it yields substantial improvements in accuracy.

\begin{table}[ht]
  \caption{Impact of Semi-AR decoding on accuracy score}
  \label{tab:semiar-vs-nonsemiar}
  \begin{center}
    \begin{small}
        \begin{tabular}{lcccc}
          \toprule
          & \multicolumn{2}{c}{\textbf{LLaDA}} & \multicolumn{2}{c}{\textbf{Dream}} \\
          \cmidrule(lr){2-3} \cmidrule(lr){4-5}
          & \textbf{Semi-AR} & \textbf{Non-Semi-AR} & \textbf{Semi-AR} & \textbf{Non-Semi-AR} \\
          \midrule
          $n_u = 1$ & 77.79 & 58.07 & 75.59 & 42.30 \\
          $n_u = 2$ & 74.68 & 54.36 & 67.12 & 40.41 \\
          $n_u = 4$ & 66.41 & 50.19 & 48.45 & 39.12 \\
          \bottomrule
        \end{tabular}
    \end{small}
  \end{center}
  \vskip -0.1in
\end{table}

\subsection{dLLM-Cache Configurations}
\label{sec:dllm_cache_config}
\cref{tab:model_comparison} details the configuration settings to reproduce the dLLM-Cache baseline. 
The hyperparameters $K_P$ and $K_R$ denote the prompt refresh interval and the response refresh interval, respectively.
The values of $K_P$ and $K_R$ differ by an order of magnitude, resulting in frequent full computation over the response tokens, which might be helpful for long prompts. 
We adopted the identical values reported in the original paper, as the baseline's static refresh policy requires dataset-specific tuning for optimal performance.

\begin{table}[htb!]
\centering
\caption{Interval steps for LLaDA Instruct and Dream Instruct of dLLM-Cache}
\label{tab:model_comparison}
\begin{tabular}{llcccc}
\toprule
\textbf{Model} & \textbf{Interval} & \textbf{GSM8K} & \textbf{MBPP} & \textbf{MATH} & \textbf{MMLU-Pro}  \\ \midrule
\multirow{2}{*}{LLaDA 8B Instruct} & $K_P$ & 50 & 100 & 50 & 51 \\
                       & $K_R$ & 7 & 7   & 1  & 3   \\ \midrule
\multirow{2}{*}{Dream 7B Instruct} & $K_P$ & 25 & 10  & 50 & 5   \\
                       & $K_R$ & 2 & 8   & 1  & 1    \\ \bottomrule
\end{tabular}
\end{table}

\subsection{Accuracy and Throughput Results on NVIDIA B200}
\label{sec:b200_experiments}
\cref{tab:b200_experiments} reports accuracy and throughput results for \ours and Fast-dLLM on NVIDIA B200 using a batch size of 16.
The results confirm that the main conclusion of our H100 evaluation also holds on the newer Blackwell-generation GPU. 

\begin{table*}[tb!]
  \caption{Accuracy and throughput results across benchmarks with $n_u=1$ on NVIDIA B200. Accuracy is shown in black, throughput (tokens per second) in blue, and speedup relative to the original implementation in orange.}
  \label{tab:b200_experiments}
  \begin{center}
    \resizebox{0.9\textwidth}{!}{
    \begin{small}
        \begin{tabular}{l|cccccc}
          \toprule
          \textbf{LLaDA 8B} & \textbf{Original} & \textbf{DyLLM ($\tau$=0.995)}
          & \textbf{DyLLM ($\tau$=0.99)} & \textbf{Fast-Prefix} & \textbf{Fast-Dual} \\
          \midrule
          \multirow{2}{*}{GSM8K}    
          & 76.80 & 77.10 & 78.09 & 77.18 & 78.09 \\
          & \textcolor{blue}{31.27} (\textcolor{orange}{$\times 1.00$}) & \textcolor{blue}{184.81} (\textcolor{orange}{$\times 5.91$}) & \textcolor{blue}{196.24} (\textcolor{orange}{$\times 6.28$}) & \textcolor{blue}{94.85} (\textcolor{orange}{$\times 3.03$}) & \textcolor{blue}{126.71} (\textcolor{orange}{$\times 4.05$}) \\
          \midrule
          \multirow{2}{*}{MBPP}
          & 29.40 & 30.00 & 29.00 & 25.20 & 26.00 \\
          & \textcolor{blue}{43.11} (\textcolor{orange}{$\times 1.00$}) & \textcolor{blue}{210.92} (\textcolor{orange}{$\times 4.89$}) & \textcolor{blue}{227.29} (\textcolor{orange}{$\times 5.27$}) & \textcolor{blue}{112.71} (\textcolor{orange}{$\times 2.61$}) & \textcolor{blue}{143.69} (\textcolor{orange}{$\times 3.33$}) \\
          \midrule
          \multirow{2}{*}{MATH} 
          &33.46 & 38.98 & 37.92 & 33.06 & 32.70 \\
          & \textcolor{blue}{44.41} (\textcolor{orange}{$\times 1.00$}) & \textcolor{blue}{226.55} (\textcolor{orange}{$\times 5.10$}) & \textcolor{blue}{244.28} (\textcolor{orange}{$\times 5.50$}) & \textcolor{blue}{124.38} (\textcolor{orange}{$\times 2.80$}) & \textcolor{blue}{160.41} (\textcolor{orange}{$\times 3.61$}) \\
          \midrule
          \multirow{2}{*}{MMLU-Pro} 
          & 37.06 & 36.57 & 35.36 & 36.87 & 36.58 \\
          & \textcolor{blue}{26.93} (\textcolor{orange}{$\times 1.00$}) & \textcolor{blue}{147.32} (\textcolor{orange}{$\times 5.47$}) & \textcolor{blue}{157.58} (\textcolor{orange}{$\times 5.85$}) & \textcolor{blue}{86.95} (\textcolor{orange}{$\times 3.23$}) & \textcolor{blue}{118.53} (\textcolor{orange}{$\times 4.40$}) \\
          \midrule
          \textbf{Dream 7B} & \textbf{Original} & \textbf{DyLLM ($\tau$=0.9975)}
          & \textbf{DyLLM ($\tau$=0.995)} & \textbf{Fast-Prefix} & \textbf{Fast-Dual} \\
          \midrule
          \multirow{2}{*}{GSM8K}
          & 76.12 & 79.68 & 78.39 & 73.92 & 69.14 \\
          & \textcolor{blue}{29.62} (\textcolor{orange}{$\times 1.00$}) & \textcolor{blue}{237.76} (\textcolor{orange}{$\times 8.03$}) & \textcolor{blue}{248.79} (\textcolor{orange}{$\times 8.40$}) & \textcolor{blue}{169.02} (\textcolor{orange}{$\times 5.71$}) & \textcolor{blue}{255.09} (\textcolor{orange}{$\times 8.61$}) \\
          \midrule
          \multirow{2}{*}{MBPP}  
          & 54.00 & 54.60 & 56.40 & 54.00 & 51.80 \\
          & \textcolor{blue}{47.11} (\textcolor{orange}{$\times 1.00$}) & \textcolor{blue}{282.49} (\textcolor{orange}{$\times 6.00$}) & \textcolor{blue}{300.62} (\textcolor{orange}{$\times 6.38$}) & \textcolor{blue}{184.10} (\textcolor{orange}{$\times 3.91$}) & \textcolor{blue}{271.88} (\textcolor{orange}{$\times 5.77$}) \\
          \midrule
          \multirow{2}{*}{MATH}  
          & 38.20 & 44.82 & 44.20 & 37.52 & 36.30 \\
          & \textcolor{blue}{45.50} (\textcolor{orange}{$\times 1.00$}) & \textcolor{blue}{278.23} (\textcolor{orange}{$\times 6.11$}) & \textcolor{blue}{293.13} (\textcolor{orange}{$\times 6.49$}) & \textcolor{blue}{215.9} (\textcolor{orange}{$\times 4.75$}) & \textcolor{blue}{342.58} (\textcolor{orange}{$\times 7.53$}) \\
          \midrule
          \multirow{2}{*}{MMLU-Pro} 
          & 47.92 & 47.59 & 47.63 & 47.18 & 46.69 \\
          & \textcolor{blue}{26.05} (\textcolor{orange}{$\times 1.00$}) & \textcolor{blue}{192.53} (\textcolor{orange}{$\times 7.39$}) & \textcolor{blue}{204.23} (\textcolor{orange}{$\times 7.84$}) & \textcolor{blue}{165.29} (\textcolor{orange}{$\times 6.35$}) & \textcolor{blue}{248.59} (\textcolor{orange}{$\times 9.54$}) \\ 
          \bottomrule
        \end{tabular}
    \end{small}
    }
  \end{center}
  \vskip -0.2in
\end{table*}

\section{Formal Proofs of Propositions} \label{sec:proofs}
In this section, we provide the detailed mathematical derivations for the propositions presented in \cref{sec:sal_selection}. These proofs establish the theoretical foundation for using temporal cosine similarity as a proxy for salient token selection in MDLMs.

\subsection{Proof of Proposition \ref{prop:scale invariance}}
\textbf{Proposition 1 (Scale Invariance under Linear Projection).} \textit{Let $C \in \mathbb{R}^d$ be the attention context, $W_o \in \mathbb{R}^{d \times d}$ be the output projection matrix, and $\alpha \in \mathbb{R}^+$ be a positive scaling factor. The composite operation of linear projection followed by RMSNorm satisfies: $\text{RMSNorm}((\alpha C) W_o) = \text{RMSNorm}(C W_o)$.}

\begin{proof}
By the linearity of the output projection $W_o$, the projected representation scales linearly with the input:
\begin{equation}
(\alpha C) W_o = \alpha (C W_o).
\end{equation}
Let $Y = C W_o$ denote the projected vector. Recall the definition of the RMSNorm operation:
\begin{equation}
\text{RMSNorm}(Y) = \frac{Y}{\text{RMS}(Y)} \odot \mathbf{g}, \quad \text{where} \quad \text{RMS}(Y) = \sqrt{\frac{1}{d} \sum_{i=1}^d y_i^2}.
\end{equation}
Applying this to the scaled vector $\alpha Y$:
\begin{equation}
\text{RMS}(\alpha Y) = \sqrt{\frac{1}{d} \sum (\alpha y_i)^2} = \sqrt{\alpha^2 \cdot \frac{1}{d} \sum y_i^2} = |\alpha| \text{RMS}(Y).
\end{equation}
Since $\alpha \in \mathbb{R}^+$, we have $|\alpha| = \alpha$. Substituting these into the RMSNorm formula:
\begin{equation}
\text{RMSNorm}(\alpha Y) = \frac{\alpha Y}{\alpha \text{RMS}(Y)} \odot \mathbf{g} = \frac{Y}{\text{RMS}(Y)} \odot \mathbf{g} = \text{RMSNorm}(Y).
\end{equation}
Thus, $\text{RMSNorm}((\alpha C) W_o) = \text{RMSNorm}(C W_o)$, completing the proof.
\end{proof}

\subsection{Proof of Proposition~\ref{prop:error bound}}

\textbf{Proposition 2 (Error Bound via Directional Alignment).}
\textit{The $L_2$ error between normalized outputs,
$\delta = \|\mathrm{RMSNorm}(Y_t) - \mathrm{RMSNorm}(Y_{t-1})\|_2$,
is bounded by the temporal cosine similarity $s_{t,l}$ of the inputs as:
\begin{equation}
\delta \le \kappa(W_o)\sqrt{2(1-s_{t,l})}.
\end{equation}}

\begin{proof}
Let $\hat{C}_t$ and $\hat{C}_{t-1}$ be the unit-normalized input vectors (i.e., $\hat{C} = C/\|C\|$). The temporal cosine similarity is given by the dot product $s_{t,l} = \hat{C}_t^\top \hat{C}_{t-1}$. The Euclidean distance between these unit inputs is strictly determined by $s_{t,l}$:
\begin{equation}
\|\hat{C}_t - \hat{C}_{t-1}\|_2 = \sqrt{2(1 - s_{t,l})}.
\end{equation}

Consider the function $f(x) = \frac{W_o x}{\|W_o x\|_2}$ which maps a unit vector $x$ to the normalized output space. The approximation error $\delta$ corresponds to the change in this map's output. Standard results in numerical linear algebra establish that the local Lipschitz constant of the map $x \mapsto Wx / \|Wx\|$ on the unit sphere is bounded by the condition number $\kappa(W_o)$. Specifically, for any unit vectors $u, v$:
\begin{equation}
\left\| \frac{W_o u}{\|W_o u\|} - \frac{W_o v}{\|W_o v\|} \right\|_2 \le \kappa(W_o) \| u - v \|_2.
\end{equation}

Applying this inequality to our inputs $\hat{C}_t, \hat{C}_{t-1}$ and outputs $\hat{Y}_t, \hat{Y}_{t-1}$ (where $\hat{Y}$ is the normalized output):
\begin{equation}
\delta = \|\hat{Y}_t - \hat{Y}_{t-1}\|_2 \le \kappa(W_o) \|\hat{C}_t - \hat{C}_{t-1}\|_2.
\end{equation}

Substituting the expression for the input distance:
\begin{equation}
\delta \le \kappa(W_o) \sqrt{2(1 - s_{t,l})}.
\end{equation}

This confirms that the upper bound holds strictly without relying on small-angle approximations.
\end{proof}

\section{Other Related Works}

\subsection{Confidence Aware Decoding}
Confidence-aware decoding~\cite{fastdllm} employs a dynamic decoding strategy that is independent of the predefined number of denoising steps $T$. 
Specifically, it unmasks any positions whose confidence score exceeds a user-specified threshold during the generation process.
This approach represents a distinct optimization scope compared to architectural or caching strategies. 
It operates strictly at the sampling level, using the model's output logits to determine which tokens to decode.
Because this optimization is model-agnostic, it can be seamlessly integrated not only with Fast-dLLM but also with our proposed method and any other aforementioned frameworks. 

\subsection{Sparse Attention}
To address the quadratic complexity of attention operations in AR LLMs, various sparse attention mechanisms have been proposed.
These methods typically leverage the observation that attention scores are highly localized, with tokens primarily attending to their neighbors~\cite{sparsetransfomer, longformer}.
Beyond static patterns, recent works have focused on runtime sparsity, analyzing dynamic attention patterns to prune less critical computations during inference.
For instance, H2O~\cite{h2o} and Deja Vu~\cite{dejavu} discard tokens with low accumulated attention scores (KV cache eviction) or predict contextual sparsity to skip blocks, while MInference~\cite{minference} identifies specific sparse patterns to accelerate the prefill stage.

While these methods primarily explore sparsity in autoregressive models, similar ideas have also been applied to diffusion LLMs.
During inference of diffusion LLMs, only a small subset of tokens changes at each step, leading to substantial redundancy in attention score activations across adjacent steps.
SparseD~\cite{sparsed} exploits this property by computing full attention in early steps and, after a pre-defined step, constructing a binary attention mask by selecting high-scoring attention blocks.
This mask is then reused in subsequent steps, allowing the model to perform efficient sparse operations for the remainder of the denoising steps.